\title{Accelerating Monte Carlo Tree Search with Probability Tree State Abstraction}
\author{
Yangqing~Fu
\qquad
Ming~Sun 
\qquad
Buqing~Nie
\qquad
Yue~Gao\thanks{Corresponding author. 
}\\
MoE Key Lab of Artificial Intelligence, AI Institute, Shanghai Jiao Tong University\\
{\tt \{frank79110, mingsun, niebuqing, yuegao\}@sjtu.edu.cn}}
\begin{document}

\newtheorem{theorem}{Theorem}[section]
\newtheorem{proposition}[theorem]{Proposition}
\newtheorem{lemma}[theorem]{Lemma}
\newtheorem{corollary}[theorem]{Corollary}
\theoremstyle{definition}
\newtheorem{definition}[theorem]{Definition}
\newtheorem{assumption}[theorem]{Assumption}

\newtheorem{remark}[theorem]{Remark}

\maketitle


\begin{abstract}
Monte Carlo Tree Search (MCTS) algorithms such as AlphaGo and MuZero have achieved superhuman performance in many challenging tasks. However, the computational complexity of MCTS-based algorithms is influenced by the size of the search space. To address this issue, we propose a novel probability tree state abstraction (\textit{PTSA}) algorithm to improve the search efficiency of MCTS. A general tree state abstraction with path transitivity is defined. In addition, the probability tree state abstraction is proposed for fewer mistakes during the aggregation step. Furthermore, the theoretical guarantees of the transitivity and aggregation error bound are justified. To evaluate the effectiveness of the \textit{PTSA} algorithm, we integrate it with state-of-the-art MCTS-based algorithms, such as Sampled MuZero and Gumbel MuZero. Experimental results on different tasks demonstrate that our method can accelerate the training process of state-of-the-art algorithms with $10\%-45\%$ search space reduction.

\end{abstract}

\section{Introduction}
\label{introduction}

With the advancement of reinforcement learning (RL), AlphaGo is the first algorithm that can defeat human professional players in the game of Go \cite{silver2016mastering}. With no supervised learning of expert moves in AlphaGo Zero \cite{silver2017alphagozero} to restructured self-play in AlphaZero \cite{silver2017mastering}, until the recent MuZero that conducts Monte Carlo Tree Search (MCTS) in hidden state space with a learned dynamics model \cite{schrittwieser2020mastering}. MuZero presents a powerful generalization framework that allows the algorithm to learn without a perfect simulator in complex tasks. Recently, EfficientZero \cite{ye2021mastering} has made significant progress in the sample efficiency of MCTS-based algorithms. This development has opened up new possibilities for real-world applications, including robotics and self-driving.



When dealing with complex decision-making problems, increasing the search depth is necessary to achieve more accurate exploration in the decision space, but this also leads to higher time and space complexity \cite{sun2021research,zhao2022mcts}. For instance, MuZero trained for 12 hours with 1000 TPUs to learn the Go game, and for a single Atari game, it needs 40 TPUs for 12 hours of training \cite{schrittwieser2020mastering}. One approach to reduce the computation is the state abstraction method, which aggregates states based on a certain similarity measure to obtain a near-optimal policy \cite{majeed2019performance,hutter2016extreme, abel2017toward}. state abstraction is a crucial technique in reinforcement learning (RL) that enables efficient planning, exploration, and generalization \cite{abel2018state}. 

To reduce the search space of MCTS, previous studies have shown the potential of specific state abstraction techniques \cite{hostetler2014state, bai2016markovian, sokota2021monte, dockhorn2021game}. 
However, finding the minimum abstract state space in these studies is an NP-Hard problem \cite{even2003approximate}. To our knowledge, this work is the first to define path transitivity in the formulation of tree state abstraction, which enables the discovery of the minimum abstract state space in polynomial time. Additionally, recent MCTS-based algorithms utilize neural networks to estimate the value or reward of states, which may lead to errors in aggregating states with deterministic state abstraction functions. To address this issue, we proposed a probability tree state abstraction function that aggregates states based on the distribution of child node values, which enhances the robustness of aggregation and ensures transitivity.

This paper proposes the probability tree state abstraction (\textit{PTSA}) algorithm to improve the tree search efficiency of MCTS. The main contributions can be summarized as follows: i) A general tree state abstraction is formulated, and path transitivity is also defined in the formulation. ii) The probability tree state abstraction is proposed for fewer mistakes during the aggregation step. iii) The theoretical guarantees of the transitivity and aggregation error bound are justified. iv) We integrate \textit{PTSA} with state-of-the-art algorithms and achieve comparable performance with $10\%-45\%$ reduction in tree search space.

\section{Related Work}
\subsection{MCTS-based Methods}
MCTS is a rollout algorithm for solving sequential decision problems \cite{sutton2018reinforcement}. The fundamental idea of MCTS is to search for the most promising actions by randomly sampling the search space, and then expanding the search tree based on those actions \cite{browne2012survey}. The computational bottlenecks arise from the search loop, especially interacting with the real environment model of each iteration.

Combined with deep neural networks, MCTS-based methods have achieved better performance and efficiency in various complex tasks, such as board games \cite{silver2017alphagozero}, autonomous driving \cite{chen2020driving}, and robot planning \cite{schrittwieser2021online}. The model-based algorithm \textit{MuZero} \cite{schrittwieser2020mastering} predicts the environmental dynamics model for more efficient simulation. Based on \textit{MuZero}, \textit{EfficientZero} \cite{ye2021mastering} is proposed for the training with limited data, which achieves super-human performance on Atari 100K benchmarks. However, both \textit{MuZero} and \textit{EfficientZero} require high computational consumption when dealing with complex action spaces. To address arbitrarily complex action spaces, the sample-based policy iteration framework \cite{hubert2021learning} is proposed. \textit{Sampled MuZero} extends \textit{MuZero} by sub-sampling a small fraction of possible moves and achieves higher sample efficiency with fewer expanded actions and simulations. The experimental results show that planning over the sampled tree provides a near-optimal approximation \cite{hubert2021learning}. To further reduce the number of simulations, \textit{Gumbel MuZero} utilizes the Gumbel-Top-k trick to construct efficient planning \cite{danihelka2022policy}. 

\subsection{State Abstraction}
State abstraction is aimed at reducing the complex state space by aggregating the similar states \cite{abel2018state}. The original state space $S$ can be mapped into a smaller abstract state space $S_{\phi}$ by state abstraction. By grouping similar states together, state abstraction can help to identify patterns and regularities in the environment, which can inform more effective decision-making \cite{hostetler2014state}.

There are two main challenges when applying state abstraction to RL problems. The first challenge is to decrease the value loss between $S$ and $S_{\phi}$. With bounded value loss, the approximate state abstractions allow the agent to learn a near-optimal policy with improved training efficiency \cite{abel2018state, abel2016near}. The second challenge is to compute the smallest possible abstract state space, which is proven that the computational complexity is NP-hard \cite{even2003approximate}. The transitive state abstraction \cite{abel2018state} is defined to efficiently compute the smallest possible abstract state space. However, most transitive state abstractions with deterministic predicates have low fault tolerance. To improve the robustness of aggregation, we measure the abstraction probability of the state pairs based on the expected value distributions. In addition, some previous studies have analyzed and discussed some specific state abstractions in tree structure \cite{hostetler2014state, bai2016markovian, sokota2021monte, dockhorn2021game}, but there is no formal definition and analysis for general tree state abstractions. To our knowledge, \textit{PTSA} is the first method that defines a general formulation of tree state abstractions for deep MCTS-based methods and analyzes the transitivity and aggregation error.

\section{Preliminaries} 

In this section, the prerequisites for our proposed method are introduced. We consider an agent learning in a Markov decision process (MDP) represented as $\langle \mathcal{S}, \mathcal{A}, R, \mathcal{T}, \gamma\rangle$, where $\mathcal{S}$ denotes the state space, $\mathcal{A}$ denotes the action space, $R: \mathcal{S} \times \mathcal{A} \mapsto \mathbb{R}$ denotes the reward function, $\mathcal{T}: \mathcal{S} \times \mathcal{A} \mapsto \mathbb{P}(\mathcal{S})$ denotes the transition model, and $\gamma \in [0,1]$ is the discount factor. The goal is to learn a policy $\pi: \mathcal{S} \mapsto \mathbb{P}(\mathcal{A})$ that maximizes the long-term expected reward in the MDP.

\subsection{Monte Carlo Tree Search}

MCTS-based algorithms typically involve four stages in the search loop: selection, expansion, simulation, and backpropagation. After $N$ iterations of the search loop, MCTS generates a policy based on the current states. In the selection stage of each iteration, an action is selected by maximizing over UCB. Notably, AlphaZero \cite{silver2017alphagozero} and MuZero \cite{schrittwieser2020mastering}, two successful RL algorithms developed based on a variant of Upper Confidence Bound (UCB) \cite{gelly2006exploration} called probabilistic upper confidence tree (PUCT) \cite{rosin2011multi}, have achieved remarkable results in board games and Atari games. The formula of PUCT is given by Eq. \eqref{PUCT}:
\begin{equation}
\label{PUCT}
a^{k} = \operatorname{argmax}_{a} Q(s, a)+c(s) \cdot P(s, a) \frac{\sqrt{\sum_{b} N(s, b)}}{1+N(s, a)},
\end{equation}
where $k$ is the index of iteration, $Q(s, a)$ denotes the value of action $a$ in state $s$, $c(s)$ is a hyperparameter for balancing the value score with the visiting counts $N(s, a)$, and $P(s, a)$ is the policy prior obtained from neural networks.

\subsection{State Abstraction in RL}
State abstraction methods aggregate similar environment states to compressed descriptions \cite{andre2002state}, which simplify the state spaces and significantly reduce the computation time \cite{anand2016oga}. The state abstraction type is formulated as below \cite{abel2017toward}:

\begin{definition}
    (State Abstraction Type) A state abstraction type is a set of functions $\phi:\mathcal{S} \mapsto \mathcal{S}_{\phi}$ related to a fixed predicate on state pairs: $p_{M}: \mathcal{S} \times \mathcal{S} \mapsto\{0,1\}$.
    When function $\phi$ aggregates the state pair $(s_{1}, s_{2})$ in MDP $M$, the predicate $p_{M}$ must be true: $\phi\left(s_{1}\right)=\phi\left(s_{2}\right) \Longrightarrow p_{M}\left(s_{1}, s_{2}\right)$.
\end{definition}

 
The conditions for state abstraction are usually strict, which may cause insufficient compression in state spaces. Recent studies have shown that transitive state abstraction \cite{abel2016near} is computationally efficient and can achieve near-optimal decision-making, which is defined as:

\begin{definition}
    (Transitive State Abstraction) For a given state abstraction $\phi$ with predicate $p_{M}$, if $ {\forall} (s_{1}, s_{2}, s_{3}) \in \mathcal{S}$ satisfies $\left[p_{M}\left(s_{1}, s_{2}\right) \wedge p_{M}\left(s_{2}, s_{3}\right)\right] \Longrightarrow p_{M}\left(s_{1}, s_{3}\right)$, the state abstraction $\phi$ is a transitive state abstraction.
\end{definition}


Some previous state abstraction functions are shown in Table \ref{tab: previous state abstractions}, which aim to abstract similar states in general reinforcement learning methods. For instance, abstraction $\phi_{a^{*}}$ considers the optimal actions and values of states, which is also widely studied in MCTS-based methods \cite{lan2022alphazero}.

Transitive state abstraction can reduce the computational cost of finding the smallest abstract state space. As shown in Table \ref{tab: previous state abstractions}, most approximate state abstractions are not transitive. However, transitivity is challenging to MCTS-based methods due to the tree structure. In MCTS, a path may contain multiple states, so it is necessary to derive the transitivity of the path from the transitivity of the states. Finding the smallest abstract space in the search tree becomes an NP-hard problem if the state abstraction function lacks transitivity in the path. 

 \begin{table}
\caption{Some previous state abstraction functions \cite{abel2016near,abel2018state,li2006towards}.}

\label{tab: previous state abstractions}
\begin{center}
\centering
\scalebox{0.95}{
\centering
\begin{tabular}{lcc}
\toprule
Abstractions &Predicate  &Transitive\\
\midrule 						
$\phi_{a^{*}} $ &$a^{*}_{1} = a^{*}_{2} \wedge V^{*}({s_{1}})=V^{*}({s_{2}})$  & yes \\
$\phi_{a^{*}}^{\varepsilon}$ & $a^{*}_{1} = a^{*}_{2} \wedge | V^{*}({s_{1}})-V^{*}({s_{2}})| \leq \varepsilon$ &no \\
$\phi_{Q^{*}}$	& $\max _{a}\left|Q_{M}^{*}\left(s_{1}, a\right)-Q_{M}^{*}\left(s_{2}, a\right)\right| = 0$ &yes\\
$\phi_{Q^{*}}^{\varepsilon} $	&  $\max _{a}\left|Q_{M}^{*}\left(s_{1}, a\right)-Q_{M}^{*}\left(s_{2}, a\right)\right| \leq \varepsilon $ & no \\
 $\phi_{Q^{*}_{d}}$	& $\forall_{a}:\left\lceil\frac{Q^{*}\left(s_{1}, a\right)}{d}\right\rceil=\left\lceil\frac{Q^{*}\left(s_{2}, a\right)}{d}\right\rceil$ &yes \\
\bottomrule
\end{tabular}
}
\end{center}
\end{table}

\begin{wrapfigure}{R}{0.55\textwidth}
    \centering
    \includegraphics[width=0.55\textwidth]{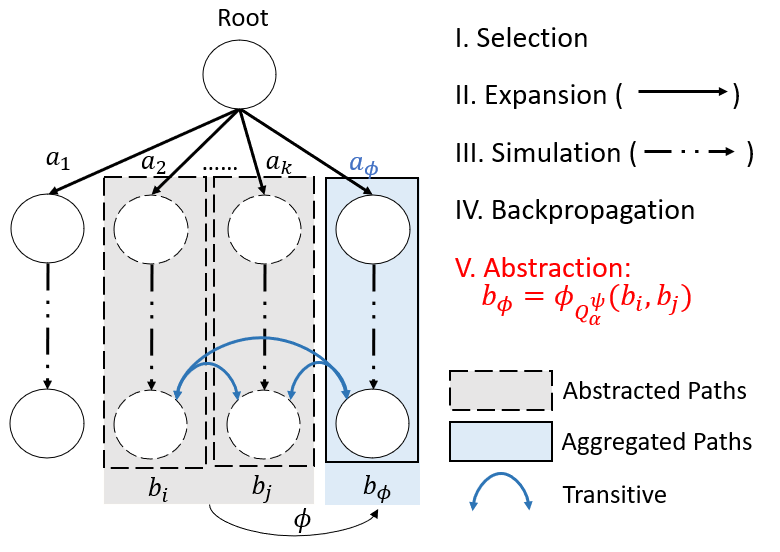}
    \caption{The overview structure of \textit{PTSA} algorithm. The original tree search space in MCTS is mapped into a smaller abstract space efficiently by transitive probability tree state abstraction $\phi_{Q^{\psi}_{\alpha}}$.}
    \label{fig:AbS}
    \vskip -0.5cm
\end{wrapfigure}

\section{PTSA-MCTS}
In this section, we introduce the proposed probability tree state abstraction (\textit{PTSA}) algorithm, which improves the tree search efficiency for MCTS. As described in Figure \ref{fig:AbS}, \textit{PTSA} algorithm improves the search efficiency of MCTS in two aspects: one is reducing the search space by abstracting the original search space, and the other is finding the smallest abstract space efficiently by transitive probability tree state abstraction $\phi_{Q^{\psi}_{\alpha}}$. The organization of this section is as follows: Subsection \ref{sec:tsa} gives the formulation of general tree state abstraction. Subsection \ref{sec:PTSA} presents our proposed probability tree state abstraction. Subsection \ref{sec:algorithm}  presents the PTSAZero algorithm, which integrates \textit{PTSA} with Sampled MuZero, and more information can be found in the Appendix. Subsection \ref{sec:Theo} provides the proofs of transitivity in tree structures and bounded aggregation error under balanced exploration.

\subsection{Tree State Abstraction Formulation}\label{sec:tsa}

The formulation of our tree state abstraction is provided with a general abstraction operator for the tree structure, which can be utilized with arbitrary state abstraction types. To facilitate the derivation of theorems and properties, the definition of tree state abstraction is given:

\begin{definition}
    (Tree State Abstraction) 

    For a given tree, $\mathcal{V}$ and $\mathcal{B}$ denote the node set and path set respectively.
    A tree state abstraction is a function $\phi : \mathcal{V} \mapsto \mathcal{V}_{\phi} \And \mathcal{B} \mapsto \mathcal{B}_{\phi}$ with node predicate $p_{vM}$ and path predicate $p_{bM}$  on the sibling path pair: 
    \begin{equation}
    \begin{split}
        & p_{vM}: \mathcal{V} \times \mathcal{V} \mapsto\{0,1\}; \\
        & p_{bM}: \mathcal{B} \times \mathcal{B} \mapsto\{0,1\}.
    \end{split}
    \end{equation}
    In the tree structure, a path is a sequence of nodes and a node denotes the representation of the corresponding state. The path and node predicates abstract corresponding states of the path and node respectively. For a given path pair $(b_{1},b_{2})$ with the same length $l$, the predicate on this path pair can be decomposed as: 
    \begin{equation}
        p_{bM}(b_{1},b_{2}) = p_{vM}(v_{0}^{b_{1}}, v_{0}^{b_{2}}) \wedge \cdots \wedge p_{vM}(v_{l-1}^{b_{1}}, v_{l-1}^{b_{2}}),
    \end{equation}
    where $v_{i-1}^{b} (i = 1,...,l)$ is the $i$-th node of path $b$ with length $l$. When function $\phi$ aggregates the path pair $(b_{1}, b_{2})$ in MDP $M$, the predicate $p_{M}$ must be true:
     \begin{equation}
        \phi\left(b_{1}\right)=\phi\left(b_{2}\right) \Longrightarrow p_{bM}\left(b_{1}, b_{2}\right),
    \end{equation}
    and path $b_1$ and path $b_2$ belong to the same abstract cluster. 
\end{definition}

The tree state abstraction is applied to two search paths of equal length that start from the same parent node. This ensures that each node along the paths has a corresponding potential abstracted node, while preserving the Markov property after aggregation. For instance, the path pair $(b_1,b_2)$ from different parent nodes $v_a, v_b$ can not be aggregated, which violates the Markov property in the state transitions $v_a \rightarrow v^{b_1}_{0} \rightarrow \ldots\rightarrow v^{b_1}_{l-1}$ and $v_b \rightarrow v^{b_2}_{0} \rightarrow \ldots\rightarrow v^{b_2}_{l-1}$. 

Previous studies have proven that tree state abstraction can be an efficient approach to reducing branching factors in MCTS \cite{hostetler2014state,anand2016oga,sokota2021monte}. However, finding the smallest abstract state space is an NP-Hard problem in the previous studies \cite{even2003approximate}. Following the definition of transitive state abstraction \cite{abel2018state}, we define the path transitivity in the tree state abstraction formulation:

\begin{definition}
    (Path Transitivity) For a given tree state abstraction $\phi$ with predicate $p_{bM}$, if $ {\forall} (b_{1}, b_{2}, b_{3}) \in \mathcal{B}$ satisfies $\left[p_{bM}\left(b_{1}, b_{2}\right) \wedge p_{bM}\left(b_{2}, b_{3}\right)\right] \Longrightarrow p_{bM}\left(b_{1}, b_{3}\right)$, the state abstraction $\phi$ has path transitivity.
\end{definition}

The definition of path transitivity extends the equivalence of state abstraction in tree state space and general RL state space. Tree State abstraction for MCTS maps the original path space $\mathcal{B}$ into the abstract path space $\mathcal{B_{\phi}}$. For non-transitive tree state abstraction, it is necessary to determine whether all possible path pairs belong to the same abstract cluster, and paths may appear repeatedly in path pairs, which requires a massive computation cost to obtain the smallest $\mathcal{B_{\phi}}$. 

\begin{algorithm}[t!]
\caption{PTSAZero}
\label{alg:TSAES}
\setlength{\multicolsep}{2pt}
\begin{multicols}{2} \raggedcolumns
\begin{algorithmic}[1]
\STATE \textbf{Input:} Root node $v_0$, dynamics network $\mathbf{d}_{\theta}$, policy network $\mathbf{p}_{\theta}$, value network $\mathbf{v}_{\theta}$
\STATE Initialize searched-path list $S_{L}=\{[v_0] \}$
\FOR {$n = 0,1,2...$}
\STATE Reset $v = v_0$, searching branch $b_{s} = [v]$
        \STATE Selection with \textbf{PUCT}
        \STATE Add child $v'$ into $b_{s}$, $v = v'$
\STATE Update hidden state $h$ and reward $r$:
\STATE \quad \quad \ \  $h$, $r= \mathbf{d}_{\theta}(v.parent.h,a)$
\STATE Predict value $V$ and policy $\pi$:
\STATE \quad \quad \ \ $V, \pi = \mathbf{v}_{\theta}(v.h),\mathbf{p}_{\theta}(v.h)$
\STATE Expand child nodes: $v.expand(\pi, h, r)$ 
\STATE Backpropagation along path $b_{s}$
\STATE $S_{L}.add(b_{s})$
\FOR {$b_{i} \in S_{L}$} 
    \IF{$\phi_{Q^{\psi}_{\alpha}} (b_{i}) = \phi_{Q^{\psi}_{\alpha}} (b_{s})$}
        \STATE $b_{j} = \underset{b \in (b_{i},b_{s})}{\arg \min }(b.V)$ 
        \STATE $v_0.pruning(b_{j})$, $S_{L}.delete(b_{j})$
    \ENDIF
\ENDFOR
\ENDFOR

\end{algorithmic}
\end{multicols}
\end{algorithm}

\subsection{Probability Tree State Abstraction}\label{sec:PTSA}
State-of-the-art MCTS-based algorithms utilize neural networks to estimate the value or reward of states. However, hard constraints from previous state abstractions can lead to incorrect aggregation during the early training stage. To reduce the probability of states being mapped to the incorrect abstract space due to bias in network prediction, a novel probability tree state abstraction $ \phi_{Q_{\alpha}^{\psi}}$ is proposed in this work:

\begin{definition}
     (Probability Tree State Abstraction $ \phi_{Q_{\alpha}^{\psi}}$) For a given $\alpha \in [0, 1]$ with node predicate $p_{vM}$ and path predicate $p_{bM}$, the aggregation probability of $\phi_{Q_{\alpha}^{\psi}}$ is defined as:
    \begin{equation}
    \begin{split}
        &\mathbb{P}\{p_{bM}(b_{1},b_{2})=1\}\triangleq
        \mathbb{P}\{\phi_{Q_{\alpha}^{\psi}}(b_{1})=\phi_{Q_{\alpha}^{\psi}}(b_{2})\}=
        1-\prod^{l}_{i}{(1-\mathbb{P}\{p_{vM}(v^{b_{1}}_{i},v^{b_{2}}_{i})=1\})}; 
    \end{split}
    \label{psb}
    \end{equation}
    \begin{equation}
     \begin{split}
        &\mathbb{P}\{p_{vM}(v^{b_{1}}_{i},v^{b_{2}}_{i})=1\} \triangleq \alpha (1 - D_{JS}(\mathbb{P}\{Q^{\psi}(v^{b_{1}}_{i},a)\}\|\mathbb{P}\{Q^{\psi}(v^{b_{2}}_{i},a)\}))
        \end{split}.
        \label{psn}
    \end{equation}
    where $\mathbb{P}\{Q^{\psi}(v,a)\} = \frac{exp(Q^{*}(v,a))}{\sum_{j \in \mathcal{A}}exp(Q^{*}(v,j))}$, and $D_{JS}$ is the Jensen-Shannon divergence. 
    
   

\end{definition}
$\phi_{Q_{\alpha}^{\psi}}$ encourages nodes that have the same candidate actions with similar value distribution expectations to be aggregated. Using Jensen-Shannon divergence instead of Kullback-Leibler divergence is more advantageous for numerical stability during computation.

\subsection{Integration with Sampled MuZero} \label{sec:algorithm}

Our proposed \textit{PTSA} algorithm can be integrated with state-of-the-art MCTS-based algorithms. The integration includes two main components: offline learning and online searching. Offline learning involves updating the dynamics, prediction, and value networks by sampling trajectories from a buffer. Online searching involves interacting with the environment to obtain high-quality trajectories, similar to MuZero algorithm. Algorithm\ref{alg:TSAES} shows how \textit{PTSA} can be integrated with Sampled MuZero \cite{hubert2021learning} during the searching stage. Compared with the original \textit{Sampled MuZero}, lines 4-12 describe how to collect all searched paths and update the corresponding node values during the multiple iterations. Lines 13-19 describe how
tree state abstraction reduces the search space efficiently. Based on Theorem \ref{transitive}, abstracting the most recently searched path is enough to find the smallest abstract space for MCTS-based methods.  $(\phi(b_i)=\phi(b_s))$ returns a boolean value, where "true" denotes aggregating $b_i$ and $b_s$. This boolean value is determined by calculating the probability $\mathbb{P}(\phi(b_i)=\phi(b_s))$ based on Eq.\eqref{psb} and Eq.\eqref{psn}.

$S_L$ is a list that records the searched paths in the current search tree. $S_L.delete(b)$ and $S_L.add(b)$ refer to removing and recording path $b$ in $S_L$ respectively. The $pruning(b_j)$ action denotes removing unique nodes of path $b_j$ compared to the other abstracted path in the search tree.  $h$ denotes the hidden feature of the original real environment state. Algorithm \ref{alg:TSAES} can be generalized to all tree state abstraction functions by replacing $\phi_{Q^{\psi}_{\alpha}}$. The selection, expansion, and backpropagation steps are the same with \textit{Sampled MuZero} \cite{hubert2021learning}. To maintain the balance of $\sum_{b} N(s, b)$, the visit count of the aggregate node needs to be accumulated into the corresponding state pair. Furthermore, aggregated nodes with different sets of legal actions could lead to unnecessary exploration of invalid or irrelevant parts of the search space, slowing down the search and potentially reducing the quality of the results. In our implementation, we ensure that the aggregated node only expands legal actions for the abstracted state, thus avoiding any negative impacts caused by illegal actions. An undoing aggregation operation has been considered in \ref{alg:TSAES}. As the value estimation becomes more accurate, some previously aggregated nodes in the search will no longer be aggregated in the new search. With each new timestep, the value of the search tree nodes is re-evaluated, leading to changes in the following aggregation results.

In Sampled MuZero algorithm, the computational complexity of simulating from the root node can be expressed as $\mathcal{O}(N_s \cdot (\mathcal{O}(S) + \mathcal{O}(D) + \mathcal{O}(P) + \mathcal{O}(V)))$. $N_s$ represents the number of simulations, and $\mathcal{O}(S)$ denotes the computational complexity of the simulation process, which includes selecting children, expanding, and backpropagating. Additionally, $\mathcal{O}(D)$, $\mathcal{O}(P)$, and $\mathcal{O}(V)$ denote the computational complexities of the dynamics network $\mathbf{d}\theta$, policy network $\mathbf{p}\theta$, and value network $\mathbf{v}_\theta$, respectively. The dynamics network predicts the next hidden state $z$ and corresponding reward $r$ based on the current hidden state $h$ and action $a$. In Algorithm \ref{alg:TSAES}, the time complexity is given by $\mathcal{O}(N_s \cdot (\log_{|\mathcal{A}|} N_s \cdot c_p + \mathcal{O}(S) + \mathcal{O}(D) + \mathcal{O}(P) + \mathcal{O}(V)))$ under balanced search. The balanced search term $\log_{|\mathcal{A}|} N_s \cdot c_p$ accounts for the exploration of child nodes, where $|\mathcal{A}|$ represents the number of possible actions, and $c_p$ is a constant controlling exploration. Since tree state abstraction reduces the branching factor, our algorithm enhances the efficiency of selecting child nodes with a smaller $N_s$. The specific computational time of different methods can be found in Appendix F.

\subsection{Theoretical Justification}\label{sec:Theo}
Following the formulation of the tree state abstractions, the theoretical analysis is conducted from the following perspectives:
i) Transitivity; ii) Aggregation error.

\subsubsection{Transitivity}

 To abstract tree paths efficiently, our next result shows the relationship between path transitivity and node transitivity:
\newtheorem{theorem1}{Theorem}

\begin{theorem}
\label{transitive}
    For $\forall (v_{1}, v_{2}, v_{3}) \in \mathcal{V}$
     and $(b_{1}, b_{2}, b_{3}) \in \mathcal{B}$: 
\begin{equation}
\begin{aligned}
        &\left[\left[p_{bM}\left(b_{1}, b_{2}\right) \wedge p_{bM}\left(b_{2}, b_{3}\right)\right] \Longrightarrow p_{bM}\left(b_{1}, b_{3}\right)]\right] \Longleftrightarrow \\
        &\left[\left[p_{vM}\left(v_{1}, v_{2}\right) \wedge p_{vM}\left(v_{2}, v_{3}\right)\right] \Longrightarrow p_{vM}\left(v_{1}, v_{3}\right)\right].
\end{aligned}
\end{equation}
\label{the1}
\end{theorem}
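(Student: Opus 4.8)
The plan is to exploit the decomposition of the path predicate into a conjunction of node predicates, namely $p_{bM}(b_1,b_2) = \bigwedge_{i=0}^{l-1} p_{vM}(v_i^{b_1}, v_i^{b_2})$, and to treat the claimed equivalence as two separate implications. The backward implication (node transitivity $\Longrightarrow$ path transitivity) will follow from applying node transitivity coordinate-by-coordinate, while the forward implication will follow from the degenerate case of length-one paths.

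First I would prove the $\Longleftarrow$ direction. Assume node transitivity holds, and suppose $p_{bM}(b_1,b_2) \wedge p_{bM}(b_2,b_3)$ for a triple of sibling paths of common length $l$. Decomposing each path predicate, this means that for every position $i \in \{0,\ldots,l-1\}$ both $p_{vM}(v_i^{b_1}, v_i^{b_2})$ and $p_{vM}(v_i^{b_2}, v_i^{b_3})$ are true. Applying node transitivity to the node triple $(v_i^{b_1}, v_i^{b_2}, v_i^{b_3})$ at each position yields $p_{vM}(v_i^{b_1}, v_i^{b_3})$ for all $i$. Conjoining over all positions and invoking the decomposition in the reverse direction gives $p_{bM}(b_1,b_3)$, which is exactly path transitivity.

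For the $\Longrightarrow$ direction, assume path transitivity and let $(v_1, v_2, v_3)$ be an arbitrary node triple with $p_{vM}(v_1,v_2) \wedge p_{vM}(v_2,v_3)$. The idea is to realize these three nodes as the sole nodes of three length-one paths $b_1, b_2, b_3$. For $l = 1$ the decomposition collapses to $p_{bM}(b_j,b_k) = p_{vM}(v_j, v_k)$, so the hypotheses lift to $p_{bM}(b_1,b_2) \wedge p_{bM}(b_2,b_3)$; path transitivity then delivers $p_{bM}(b_1,b_3) = p_{vM}(v_1,v_3)$, establishing node transitivity.

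I expect the main obstacle to be justifying the length-one construction against the tree state abstraction's requirement that aggregated paths share a common parent node. The cleanest resolution is to observe that $p_{vM}$ is a predicate on state features (e.g.\ the $Q$-value distributions entering $\phi_{Q_\alpha^\psi}$) and is therefore independent of a node's position in the tree, so transitivity of $p_{vM}$ is an intrinsic property of the predicate that can be tested on any three nodes realized as sibling length-one paths; equivalently, the equivalence should be read at the level of the logical predicates rather than of particular tree embeddings. I would make this independence explicit before invoking the $l=1$ case, since without it the forward direction is only established for triples that happen to be siblings.
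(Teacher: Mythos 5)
Your proposal is correct and follows essentially the same route as the paper's own proof: the forward direction via singleton (length-one) paths where $p_{bM}$ collapses to $p_{vM}$, and the backward direction by decomposing the path predicate into a conjunction of node predicates and applying node transitivity coordinate-wise. If anything, yours is slightly more complete --- the paper only writes out the coordinate-wise step for paths of length two and does not address the sibling-path/common-parent caveat that you correctly flag and resolve for the $l=1$ construction.
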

\textbf{Proof}. See Appendix A.

Theorem \ref{transitive} indicates that path transitivity is a sufficient and necessary condition for node transitivity. Compared with the previous transitive state abstractions, the proposed $ \phi_{Q_{\alpha}^{\psi}}$ is also transitive for paths as the following proposition given:

\begin{proposition}
    The probability of transitivity for $\phi_{Q_{\alpha}^{\psi}}$ can be computed as:
\begin{small}
    \begin{equation}
     \begin{split}
       &\mathbb{P}\{(p_{bM}(b_{1},b_{2}) \wedge p_{bM}(b_{2},b_{3}) \Longrightarrow p_{bM}(b_{1},b_{3}))\} = \\
       &\mathbb{P}\{\phi_{Q_{\alpha}^{\psi}}(b_{1})=\phi_{Q_{\alpha}^{\psi}}(b_{2})\} \mathbb{P}\{\phi_{Q_{\alpha}^{\psi}}(b_{2})=\phi_{Q_{\alpha}^{\psi}}(b_{3})\}
       \mathbb{P}\{\phi_{Q_{\alpha}^{\psi}}(b_{1})=\phi_{Q_{\alpha}^{\psi}}(b_{3})\} + \\
       &(1-\mathbb{P}\{\phi_{Q_{\alpha}^{\psi}}(b_{1})=\phi_{Q_{\alpha}^{\psi}}(b_{3})\})\mathbb{P}\{\phi_{Q_{\alpha}^{\psi}}(b_{2})=\phi_{Q_{\alpha}^{\psi}}(b_{3})\})(1-\mathbb{P}\{\phi_{Q_{\alpha}^{\psi}}(b_{1})=\phi_{Q_{\alpha}^{\psi}}(b_{2})\}.
    \end{split}
\end{equation}
\end{small}

\end{proposition}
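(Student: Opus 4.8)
The plan is to treat the three pairwise path–aggregation decisions as Bernoulli random variables and to reduce the claim to a short case analysis over their joint outcomes. First I would set $X_{12},X_{23},X_{13}\in\{0,1\}$ to be the indicators of the events $\{\phi_{Q_\alpha^\psi}(b_1)=\phi_{Q_\alpha^\psi}(b_2)\}$, $\{\phi_{Q_\alpha^\psi}(b_2)=\phi_{Q_\alpha^\psi}(b_3)\}$ and $\{\phi_{Q_\alpha^\psi}(b_1)=\phi_{Q_\alpha^\psi}(b_3)\}$, and write $P_{ij}:=\mathbb{P}\{\phi_{Q_\alpha^\psi}(b_i)=\phi_{Q_\alpha^\psi}(b_j)\}$ for their success probabilities, which are exactly the path–aggregation probabilities furnished by Eq.~\eqref{psb}. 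This records that the quantities appearing on the right-hand side of the claim are precisely the marginals of the $X_{ij}$.

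Second, I would pin down the event whose probability is being computed. The right-hand side is the sum of two products, $P_{12}P_{23}P_{13}$ and $(1-P_{12})P_{23}(1-P_{13})$, which correspond precisely to the joint outcomes $(X_{12},X_{23},X_{13})=(1,1,1)$ and $(0,1,0)$. I would therefore read the probabilistic transitivity of $\phi_{Q_\alpha^\psi}$ as the event that the middle link $b_2 \sim b_3$ is present and that $b_1$ attaches consistently to the class $\{b_2,b_3\}$ — aggregated with both or with neither — so that exactly these two mutually exclusive outcomes are favourable. Summing their probabilities and factoring each into marginals under independence then gives $P_{12}P_{23}P_{13}+(1-P_{12})P_{23}(1-P_{13})=P_{23}\big(P_{12}P_{13}+(1-P_{12})(1-P_{13})\big)$, which is the claimed expression; the factored form $\mathbb{P}\{X_{23}=1\}\,\mathbb{P}\{X_{12}=X_{13}\}$ serves as a useful sanity check on the enumeration.

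The step I expect to be the main obstacle is justifying that $X_{12},X_{23},X_{13}$ may be multiplied as independent factors. They are not independent a priori: the pairs share endpoint paths, so $X_{12}$ and $X_{13}$ both depend on $b_1$ through the same softmax value-distributions $\mathbb{P}\{Q^\psi(v,a)\}$ driving Eq.~\eqref{psn}. I would resolve this by conditioning on the value distributions: once these are fixed, each $P_{ij}$ is a deterministic number and each aggregation is an independent randomized test, so the joint law factors as required and the identity is to be read as holding under this conditioning. The remaining work — checking that the case enumeration is exhaustive and that the two retained outcomes are disjoint — is routine once the transitivity event has been fixed as above.
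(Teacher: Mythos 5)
Your overall strategy is the same as the paper's: write $P_{ij}:=\mathbb{P}\{\phi_{Q_\alpha^\psi}(b_i)=\phi_{Q_\alpha^\psi}(b_j)\}$, treat the three pairwise aggregation indicators as independent Bernoulli variables, enumerate the favourable joint outcomes, and sum products of marginals. Your discussion of why the three indicators may be multiplied (conditioning on the value distributions so that each aggregation becomes an independent randomized test) is in fact more careful than the paper, which silently assumes independence. The problem is the event you enumerate. The displayed formula in the main text has corrupted parentheses; the unambiguous version proved in Appendix~B is $P_{12}P_{23}P_{13}+(1-P_{13})(1-P_{12}P_{23})$, obtained by splitting on the value of $p_{bM}(b_1,b_3)$: the case $p_{bM}(b_1,b_3)=1$ contributes the outcome $(X_{12},X_{23},X_{13})=(1,1,1)$, and the case $p_{bM}(b_1,b_3)=0$ contributes \emph{all} outcomes with $X_{13}=0$ except $(1,1,0)$, i.e.\ $(0,0,0)$, $(0,1,0)$ and $(1,0,0)$. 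You instead retain only $(1,1,1)$ and $(0,1,0)$, which forces $X_{23}=1$ and yields $P_{23}\bigl(P_{12}P_{13}+(1-P_{12})(1-P_{13})\bigr)$. That is a genuinely different quantity: taking $P_{12}=P_{23}=P_{13}=0$ (no pair is ever aggregated, so the implication holds vacuously and the probability of transitivity ought to be $1$), the paper's expression gives $1$ while yours gives $0$. So the "consistent attachment of $b_1$ to the class $\{b_2,b_3\}$" reading of the event is not the one the proposition is about, and the resulting identity is not the one the paper proves.

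A secondary remark, which you partially sensed when you said the event had to be reverse-engineered from the right-hand side: neither your event nor the paper's is literally the material implication $p_{bM}(b_1,b_2)\wedge p_{bM}(b_2,b_3)\Rightarrow p_{bM}(b_1,b_3)$, whose probability under independence would be $1-P_{12}P_{23}(1-P_{13})$. The paper's case analysis actually computes the probability of the biconditional $X_{13}=X_{12}\wedge X_{23}$ (it assumes, in the case $p_{bM}(b_1,b_3)=1$, that both $p_{bM}(b_1,b_2)$ and $p_{bM}(b_2,b_3)$ must hold, which is the converse direction rather than a consequence of the implication). If you redo the enumeration for that biconditional event — keep $(1,1,1)$ together with every $X_{13}=0$ outcome other than $(1,1,0)$ — your framework recovers the paper's formula with no other changes.
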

\textbf{Proof}. See Appendix B.

The computational complexity of computing the smallest possible abstract state space for transitive state abstraction is $\mathcal{O}\left(|\mathcal{S}|^{2} \cdot c_{p}\right)$ \cite{abel2018state}, where $c_{p}$ denotes the computational complexity of evaluating for a given state pair.

\subsubsection{Aggregation Error Bound}
Most approximate state abstractions are non-transitive since their cumulative aggregation value errors are unbounded. We define the aggregation error $\mathrm{E}^{\phi}$ in path set $\mathcal{B}$ as:
\begin{equation}
    \mathrm{E}^{\phi}=\sum_{b_{1},b_{2} \in \mathcal{B}}{|V^{\pi_{\phi}}(b_{1}) - V^{\pi_{\phi}}(b_{2})|}.
\end{equation}
For instance, consider a commonly used approximate state abstraction $\phi _{Q_{\varepsilon}^{*}}$ \cite{abel2016near}: 
\begin{equation}
p_{M}\left(s_{1}, s_{2}\right) \triangleq \max _{a}\left|Q_{M}^{*}\left(s_{1}, a\right)-Q_{M}^{*}\left(s_{2}, a\right)\right| \leq \varepsilon.
\end{equation}
with the value loss $\mathcal{L}_{\phi_{ Q_{\varepsilon}^{*}}} = V^{*}\left(s\right)-V^{\pi_{\phi_{ Q_{\varepsilon}^{*}}}}\left(s \right) \leq \frac{2 \varepsilon R_{max}}{(1-\gamma)^{2}}$.
The aggregation error can be bounded as $\frac{6 \varepsilon R_{max}}{(1-\gamma)^{2}}$ if two paths of length 3 are abstracted. 


In  Algorithm\ref{alg:TSAES}, the aggregation times must be less than the current branching factor\footnote{The maximum branching factor is less than $|\mathcal{A}|$ and $|\hat{\mathcal{A}}|$ in \textit{MuZero} and \textit{Sampled MuZero} respectively, where $\hat{\mathcal{A}}$ is the sampled action space}. The next result extends to the general tree state abstractions in Algorithm\ref{alg:TSAES}. The path transitivity implies that the abstracted paths present in the current $\mathcal{B_{\phi}}$ must belong to different abstract clusters, which can give the following theorem:

\begin{theorem}
\label{error}
    Considering a general tree state abstraction $\phi$ with a transitive predicate $p (\mathcal{L}_{\phi} \leq \zeta)$, the aggregation error in Alg. \ref{alg:TSAES} under balanced search is bounded as:
    \begin{equation}
        \mathrm{E}^{\phi} < \log_{|\mathcal{A}|}(N_{s}+1)\zeta.
    \end{equation}
    If predicate $p (\mathcal{L}_{\phi} \leq \zeta)$ is not transitive, the aggregation error is bounded as:
    \begin{equation}
        \mathrm{E}^{\phi} < (|\mathcal{A}|-1)\log_{|\mathcal{A}|}(N_{s}+1)\zeta.
    \end{equation}
\end{theorem}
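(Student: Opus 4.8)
The plan is to reduce the total aggregation error to a per-path-pair bound and then count how many independent aggregations Algorithm~\ref{alg:TSAES} can perform under balanced search. First I would fix the geometry: under balanced search the tree produced after $N_s$ simulations is (essentially) a complete $|\mathcal{A}|$-ary tree, whose depth satisfies $l = \log_{|\mathcal{A}|}(N_s+1)$; this is exactly the quantity that will replace the concrete ``length $3$'' used in the worked example $\frac{6\varepsilon R_{max}}{(1-\gamma)^{2}}=3\zeta$. So the opening step is simply to justify, from the balanced branching factor $|\mathcal{A}|$, that every root-to-leaf path has length at most $\log_{|\mathcal{A}|}(N_s+1)$, so that the ``$+1$'' absorbs the root/geometric-series offset.

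Next I would bound the error contributed by a single aggregated path pair. Since the predicate is $p(\mathcal{L}_\phi \le \zeta)$, any two aggregated nodes have value discrepancy at most $\zeta$; and because $p_{bM}$ decomposes node-wise along the two paths, a telescoping/triangle-inequality argument over the (at most $l$) node pairs yields $|V^{\pi_\phi}(b_1)-V^{\pi_\phi}(b_2)| \le l\zeta$ for any aggregated pair $(b_1,b_2)$. This is precisely the length-$3$ computation generalized to depth $l$.

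The heart of the argument is the counting step, and this is where transitivity enters. By Theorem~\ref{transitive} the predicate has path transitivity, so aggregation is an equivalence relation and, as already noted in the text, the representatives surviving in $\mathcal{B}_\phi$ lie in pairwise-distinct clusters. Consequently the pruning in lines~13--19 never chains errors: each cluster collapses to a single representative, so the accumulated discrepancy in $\mathrm{E}^\phi$ is that of a single depth-$l$ aggregation, giving $\mathrm{E}^\phi < l\zeta = \log_{|\mathcal{A}|}(N_s+1)\zeta$. For the non-transitive case the equivalence structure breaks: within one parent the algorithm may aggregate up to the branching factor minus one many pairs that are \emph{not} mutually aggregatable (the footnote bounds this by $|\mathcal{A}|-1$), and these errors no longer telescope to one representative. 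I would therefore multiply the transitive bound by this worst-case number of non-chainable aggregations per level, yielding $\mathrm{E}^\phi < (|\mathcal{A}|-1)\log_{|\mathcal{A}|}(N_s+1)\zeta$.

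The main obstacle I expect is this counting/telescoping step: making precise why, under transitivity, the many pruning operations performed across the $N_s$ simulations contribute only a single depth-$l$ worth of error to $\mathrm{E}^\phi$ rather than one such term per cluster or per pruning event, and correspondingly why losing transitivity inflates this by exactly $|\mathcal{A}|-1$ and no more. Pinning down the relationship between the equivalence-class structure and the sum $\sum_{b_1,b_2\in\mathcal{B}}|V^{\pi_\phi}(b_1)-V^{\pi_\phi}(b_2)|$ --- in particular identifying which pairs actually survive to contribute after pruning --- is the delicate part; the depth computation and the per-pair $l\zeta$ bound are routine by comparison.
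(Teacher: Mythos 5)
Your proposal follows essentially the same route as the paper's Appendix C proof: the balanced $|\mathcal{A}|$-ary tree gives depth $\log_{|\mathcal{A}|}(N_s+1)$, the per-node predicate error $\zeta$ accumulates along the aggregated paths to give the transitive bound, and the non-transitive case is inflated by the $|\mathcal{A}|-1$ branching factor of chainable aggregations under a common parent. The counting step you flag as delicate is precisely the step the paper also treats informally (it simply asserts that transitivity permits aggregating ``at most two paths''), so your plan is at the same level of rigor and structure as the published argument.
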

\textbf{Proof.} See Appendix C.


Theorem \ref{error} provides a theoretical guarantee for finding the smallest $\mathcal{B}_{\phi}$ within aggregation error bound in Algorithm\ref{alg:TSAES}. At the same time, this theorem also points out that the transitivity has an important effect on the tree structure: as the size of the action space $|\mathcal{A}|$ increases, the aggregation error upper bound for transitive tree state abstraction will decrease, while the upper bound for non-transitive state abstraction will increase.

\section{Experiments}
In this section, experiments focus on tree state abstraction for computational efficiency improvement. Firstly, \textit{PTSA} is integrated with state-of-the-art MCTS-based RL algorithms Sampled MuZero and Gumbel MuZero and evaluated on various RL tasks with 10 seeds. The aggregation percentages are also evaluated, which reflects the search space reduction in different tasks. In addition, the comparison between probability tree state abstraction and other state abstraction functions is also conducted.

\subsection{Results on Performance}
To demonstrate the improvement in computational efficiency, the \textit{PTSA} is integrated with Sampled MuZero in Atari and classic control benchmarks, as well as with Gumbel MuZero in the Gomoku benchmark. The performance is compared against several MCTS-based RL algorithms, including (i) MuZero \cite{schrittwieser2020mastering}, (ii) Sampled MuZero \cite{hubert2021learning}, (iii) EfficientZero \cite{ye2021mastering}, (iv) Gumbel MuZero \cite{danihelka2022policy}.

\subsubsection{Atari}
Atari games are visually complex environments that pose challenges for MCTS-based algorithms\cite{schrittwieser2020mastering,ye2022spending}. The results for each benchmark and the normalized result are shown in Figure \ref{fig:Atari}, and the shaded intervals represent the standard deviation of the performance across the different random seeds. As Gumbel MuZero does not require large simulations for Atari and control tasks, we only compare its performance in the Gomoku game. When the simulation times $N=18$, MuZero and Sampled MuZero fail to converge within the maximum training time in some tasks. Since the search space of MCTS is mapped into the abstract space with a smaller branching factor, PTSAZero ($\alpha=0.7$) can converge rapidly with fewer simulations. Although EfficientZero improves the sampling efficiency of MuZero, the increased complexity of the network entails more time to converge with the same computational resources. EfficientZero is tested with different simulation times, and the best case in time efficiency with $N=30$ is shown. The normalized score is computed by $s_{norm} = (s_{agent} - s_{min}) / (s_{max} - s_{min})$, and the normalized time is computed by $t_{norm} = (t_{agent} - t_{min}) / (t_{max} - t_{min})$. The experiment results on Atari benchmarks indicate that Sampled MuZero with \textit{PTSA} can achieve comparable performance with less training time. For MuZero-based algorithms which require massive computational resources and parallel abilities, \textit{PTSA} provides a more efficient method with less computational cost. 

 \begin{figure*}[t!]
\subfigbottomskip=2pt
\centering
\subfigure[Alien] {\includegraphics[width=0.32\linewidth]{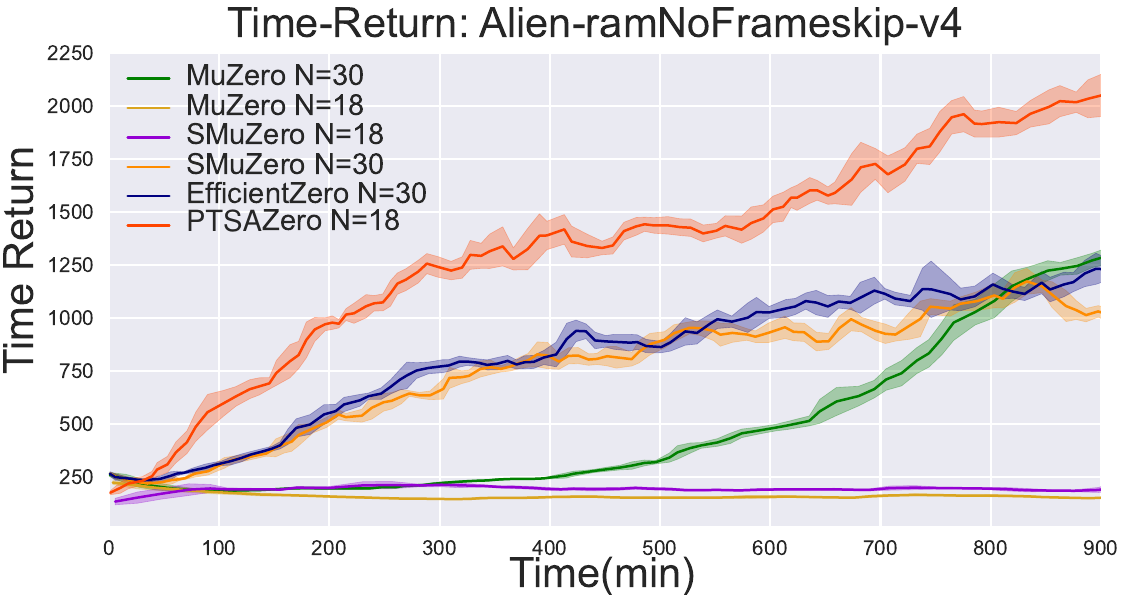}}
\subfigure[Boxing] {\includegraphics[width=0.32\linewidth]{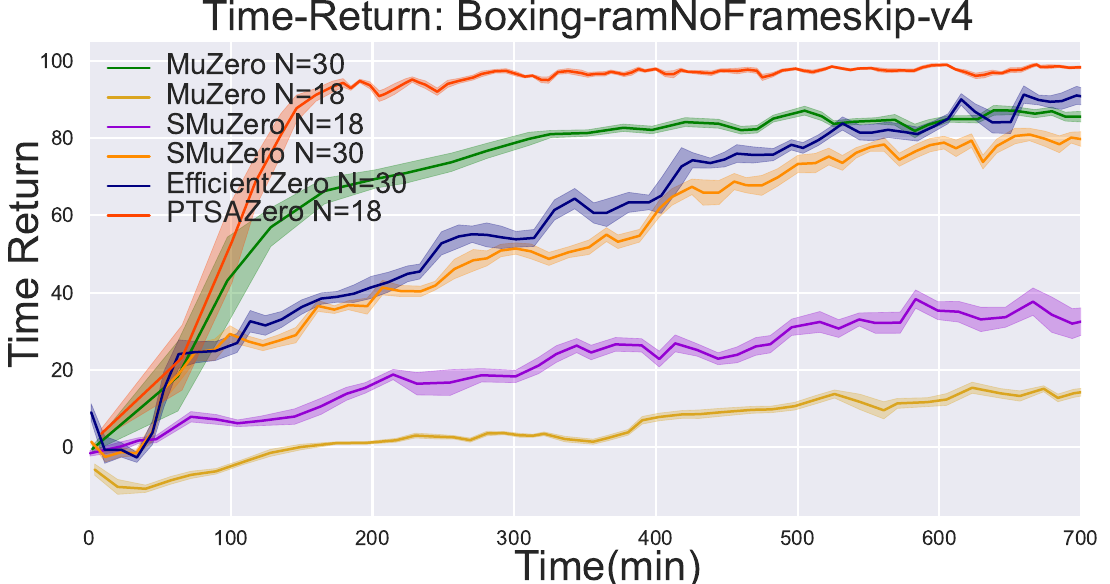}}
\subfigure[Freeway] {\includegraphics[width=0.32\linewidth]{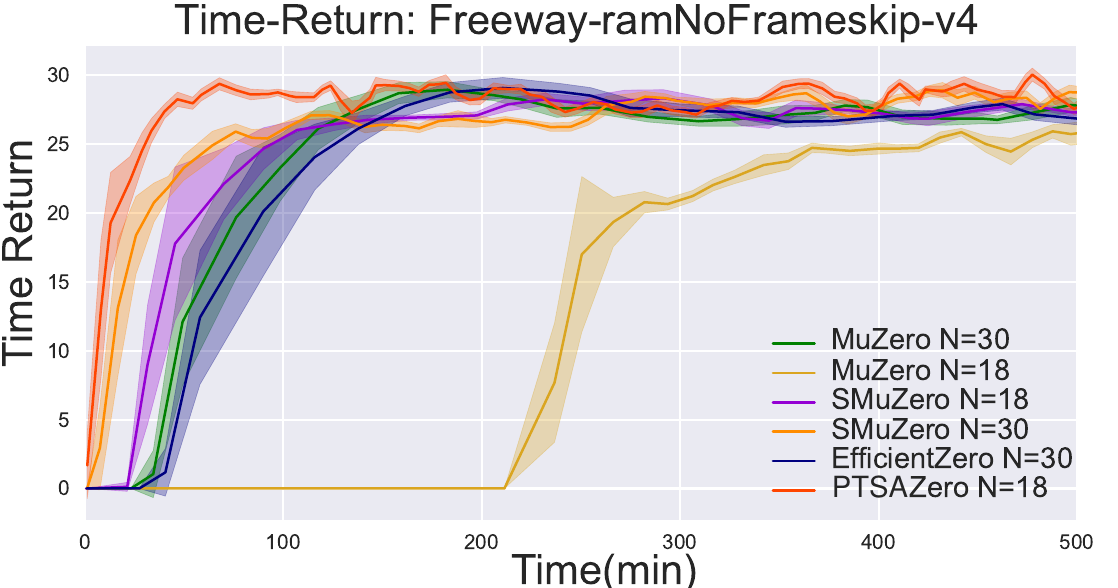}}
\subfigure[Pong] {\includegraphics[width=0.32\linewidth]{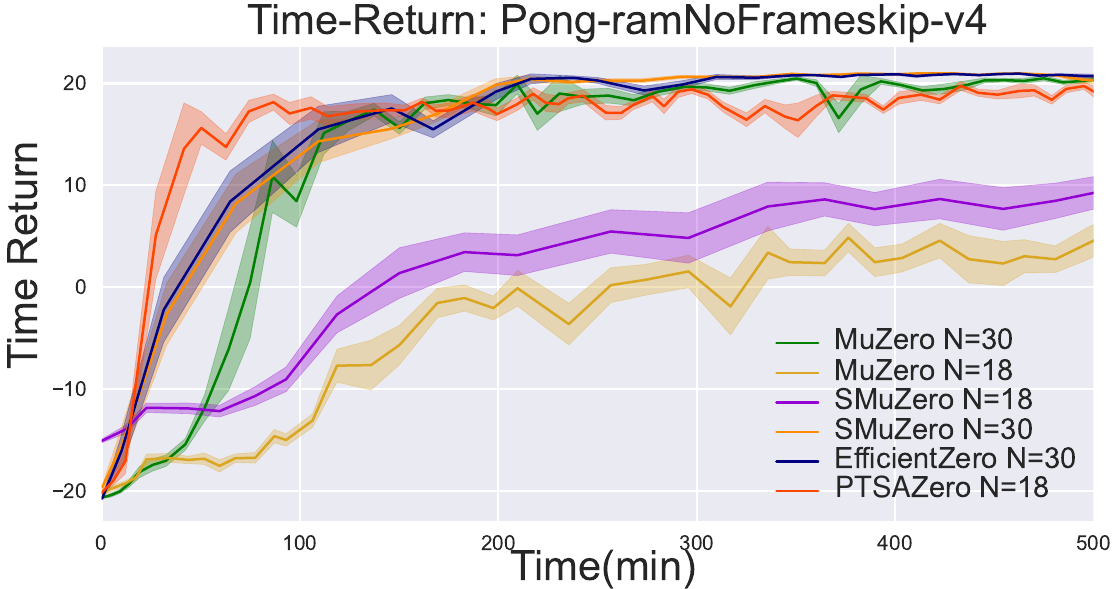}}
\subfigure[Tennis] {\includegraphics[width=0.32\linewidth]{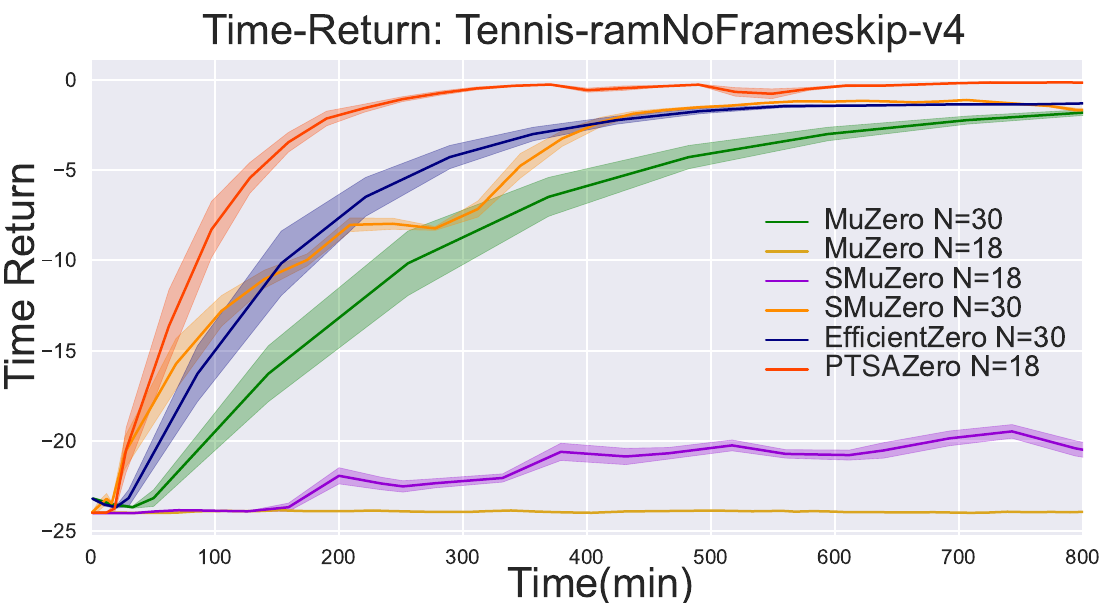}}
\subfigure[Normalized Score] {\includegraphics[width=0.32\linewidth]{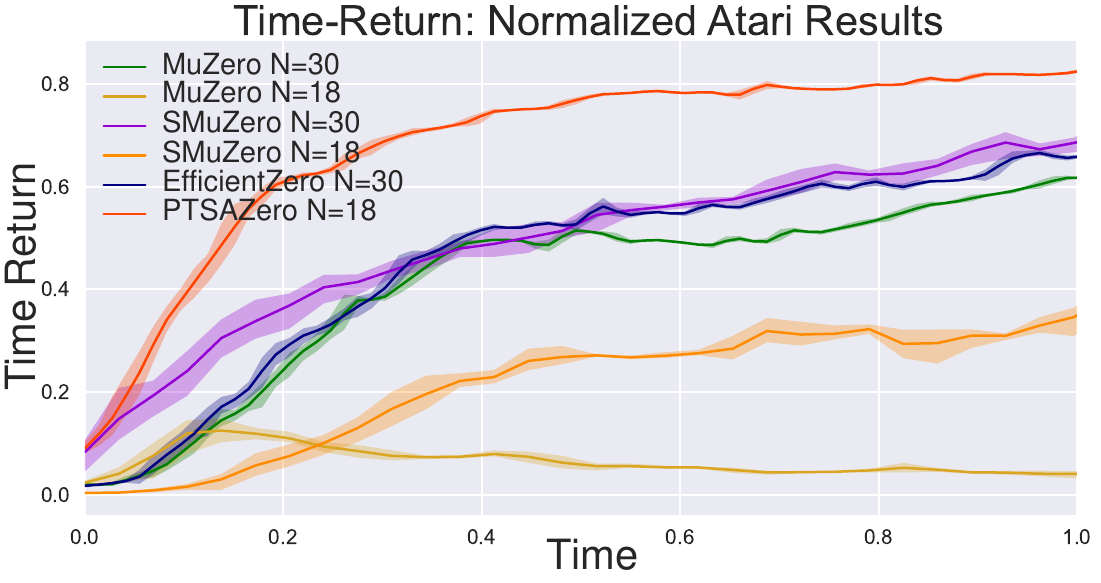}}
\caption{Experiments results on five Atari games (10 seeds) with a normalized score plot. Sampled MuZero with \textit{PTSA} (PTSAZero) is compared with three state-of-the-art MCTS-based methods: MuZero \cite{schrittwieser2020mastering}, Sampled MuZero (SMuZero) \cite{hubert2021learning}, and EfficientZero \cite{ye2021mastering}. The tasks include Alien, Boxing, Freeway, Pong, and Tennis. $N$ denotes the number of simulations. The x-axis is the training time, and the y-axis is the episode return w.r.t training time.} 
\label{fig:Atari}
\end{figure*}

\subsubsection{Control Tasks}
In the comparison experiments conducted on Gym benchmarks, including classic control and box2d tasks, certain modifications are made to increase the tree search space in control tasks. Specifically, the action spaces of CartPole-v0 and LunarLander-v2 are discretized into 100 and 36 dimensions, respectively. This discretization of the action space necessitates MCTS-based algorithms to run a larger number of simulations and increases the number of sampled actions in Sampled MuZero.

For both SMuZero and PTSAZero, the number of sampled actions is set to 25 in CartPole-v0 and 12 in LunarLander-v2. Figure \ref{fig:Gomoku} demonstrates that PTSAZero exhibits superior computational efficiency compared to other methods. In the LunarLander-v2-36 task, PTSAZero significantly improves the training speed of MuZero by a factor of 3.53. These results highlight the effectiveness of PTSAZero in enhancing the efficiency and performance of MuZero-based algorithms in various control tasks.


\begin{figure*}[t]
\subfigbottomskip=2pt
\centering
\subfigure[CartPole] {\includegraphics[width=0.23\linewidth]{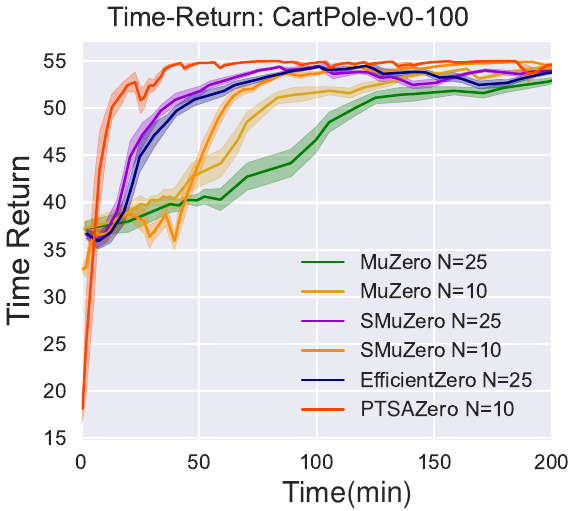}}
\subfigure[Lunarlander] {\includegraphics[width=0.24\linewidth]{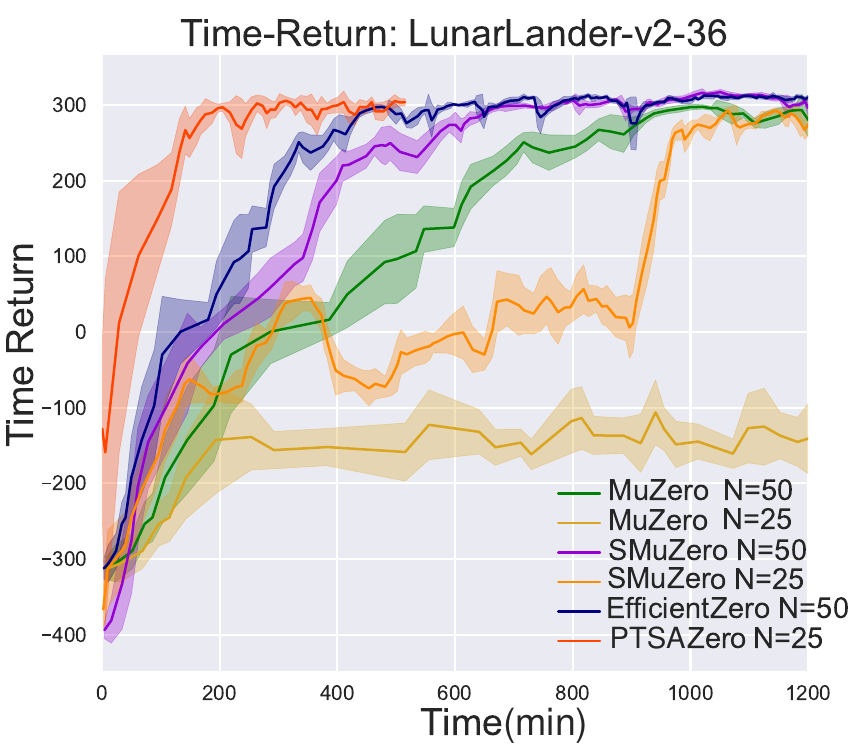}}
\subfigure[Gomoku-15$\times$15] {\includegraphics[width=0.255\linewidth]{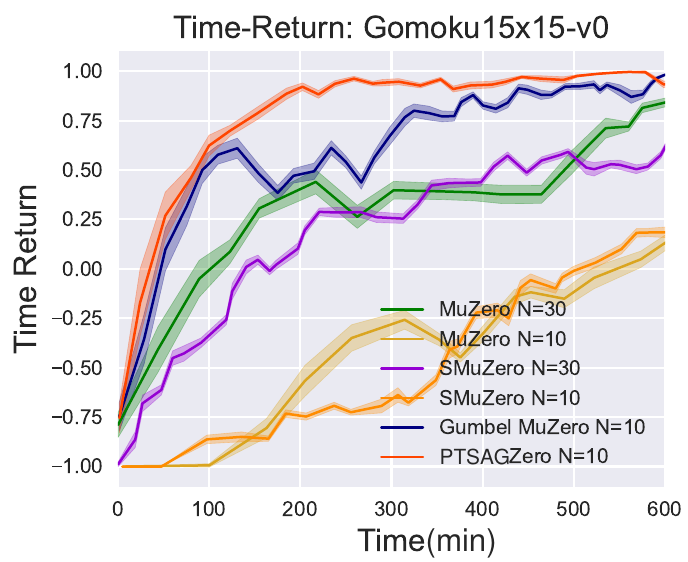}}
\subfigure[Gomoku-19$\times$19] {\includegraphics[width=0.24\linewidth]{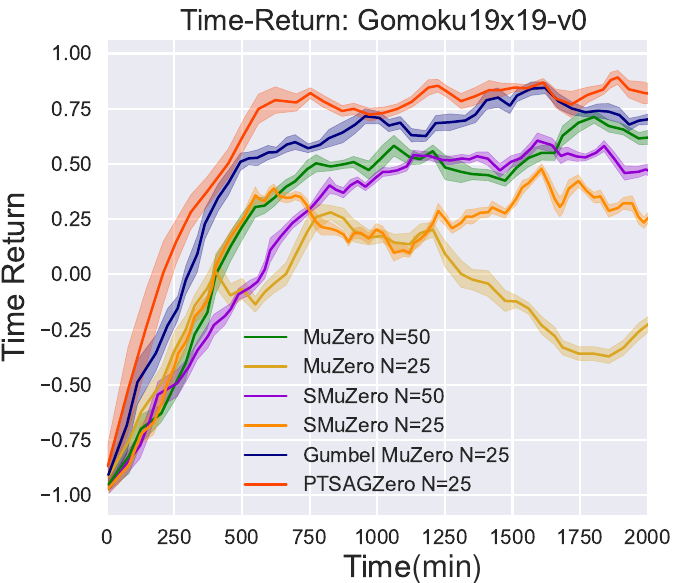}}

\caption{Experiment results of Gym and Gomoku benchmarks (10 seeds). PTSAGZero denotes the Gumbel MuZero with \textit{PTSA} algorithm.} 
\label{fig:Gomoku}
\end{figure*}

\begin{figure*}[t!]
\subfigbottomskip=2pt
\centering

\subfigure[Atari] {\includegraphics[width=0.48\linewidth]{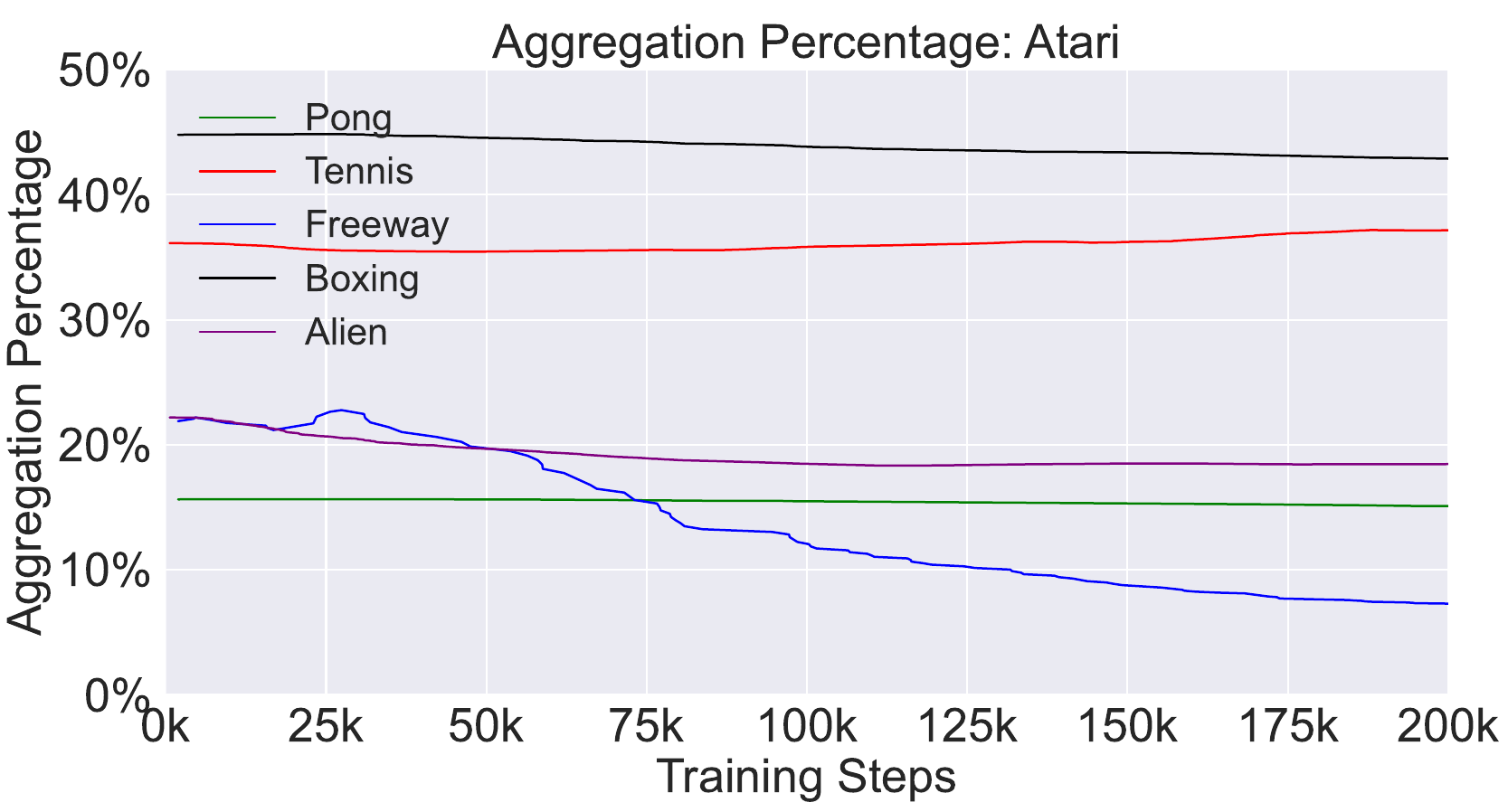}}
\subfigure[Gomoku $\&$ Control] {\includegraphics[width=0.48\linewidth]{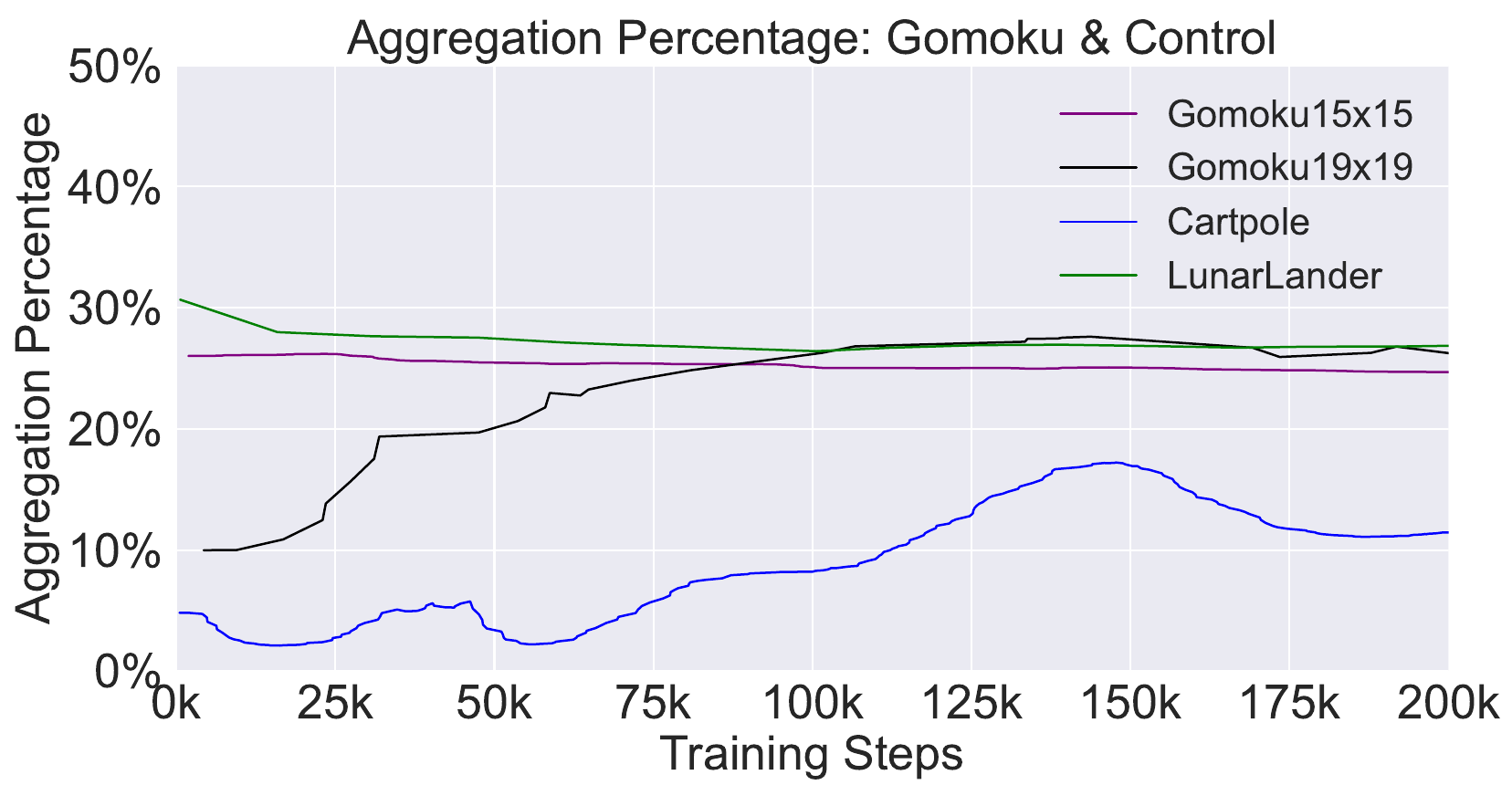}}
\caption{The aggregation percentage on paths during the training process on Atari, Control, and Gomoku tasks varies as the network parameters are updated. } 
\label{fig:aggregation}
\end{figure*}

\subsubsection{Gomoku}
Gomoku is a classic board game with multi-step search, where the agent is asked to beat an expert opponent on the $15 \times 15$ and $19 \times 19$  boards. Gumbel MuZero replaces heuristic mechanisms in original MCTS algorithms for a smaller number of simulations \cite{danihelka2022policy}. In board game tasks, we integrate \textit{PTSA} algorithm with Gumbel MuZero: tree state abstraction $\phi_{Q^{\psi}_{\alpha}}$ is utilized to reduce the search space, as described in Algorithm \ref{alg:TSAES}. The observation of the Gomoku task is a 2-dimensional matrix describing the distribution of the pieces, and the number of sampled actions for SMuZero, Gumbel MuZero, and PTSAGZero is 30. The training returns of different methods w.r.t. training steps are shown in Figure \ref{fig:Gomoku}, where a value of 1 represents a win and -1 represents a loss. The results demonstrate that Gumbel MuZero can speed up the training process of Sampled MuZero, and \textit{PTSA} can provide incremental improvement of \textit{Gumbel MuZero}. In the Gomoku-$19 \times 19$ task, despite accelerating the convergence speed of the original algorithms, \textit{PTSA} does not significantly improve the optimal performance of the algorithms. Experimental results also show that MuZero-based algorithms require a larger number of simulations to learn an effective policy due to the increased size of the action space. 

\subsection{Search Space Reduction} 
To analyze the abstracted tree search space, Figure \ref{fig:aggregation} shows the aggregation percentage (the average proportion of aggregated paths) during the training process of PTSAZero on Atari, Control, and Gomoku tasks. Results demonstrate that tree state abstraction reduces the original branching factors by $10\%$ up to $45\%$. As the network parameters are updated during the training process, the aggregation percentage tends to stabilize. Especially in the Gomoku $19 \times 19$ task, the aggregation percentage increases from $10.1\%$ to $27.8\%$ as the training progresses towards convergence. The converged aggregation percentage may vary depending on specific algorithm parameters and task environments, and it does not imply a certain range of branching factors for all tasks. Each task has unique characteristics and complexities, which can influence the effectiveness of different abstraction functions. It should be emphasized that a larger reduction in the state space does not guarantee improved training efficiency. While a higher aggregation percentage indicates a greater reduction in the search space, aggregation quality of the abstractions also determines the impact on training efficiency. 
Furthermore, since the search space of MCTS is reduced by tree state abstraction, the search depth of PTSAZero is deeper than that of SMuZero with the same number of simulations. More results of search depth can be found in Appendix F.

\begin{table}[t]
    \caption{Speedup comparison of PTSAZero with different state abstraction functions, where Abs denotes the different tree state abstraction functions (the notations are shown in Table \ref{tab: previous state abstractions}). All state abstraction functions are evaluated under the same number of simulations and sampled actions.}
    \label{tab:state abstractions}
\begin{center}
\centering
\scalebox{0.85}{
\centering
\begin{tabular}{lcccccccc}
\toprule
Abs &Pong &Boxing &Freeway &Tennis &CartPole  &Lunarlander &Acrobot &Average\\
\midrule 						
$\phi_{a^{*}} $ &{2.35±0.44}&{1.8±0.34}&{2.44±0.46}&{1.59±0.3}&{1.96±0.37}&{3.03±0.57}&{2.11±0.4} & {2.18±0.44}\\
$\phi_{a^{*}}^{\varepsilon}$ &{2.75±0.52}&{1.77±0.33}&{2.31±0.43}&{1.44±0.27}&{2.0±0.38}&{2.93±0.55}&{1.61±0.30} & {2.12±0.53} \\
$\phi_{Q^{*}}$	&{2.29±0.43}& \textbf{2.11±0.4}&{2.47±0.46}&{1.82±0.34}&{1.64±0.31}&{1.83±0.34}&{1.95±0.37} &{2.00±0.26}\\
$\phi_{Q^{*}}^{\varepsilon} $	&{3.01±0.59}&{1.95±0.39}&{2.52±0.49}&\textbf{2.22±0.44}&{1.74±0.35}&{3.13±0.61}&{2.01±0.40}  &{2.37±0.50} \\
$\phi_{Q^{*}_{d}} $	&{2.81±0.55}&{1.84±0.37}&{2.45±0.48}&{2.02±0.4}&\textbf{2.44±0.48}&{3.19±0.62}&{1.91±0.38}  &{2.38±0.46} \\
$\phi_{Q_{\alpha}^{\psi}} $	& \textbf{3.25±0.19}& {2.00±0.21} & \textbf{2.81±0.33} &{1.92±0.17} & {2.14±0.10} & \textbf{3.53±0.31}& \textbf{2.29±0.35} & \textbf{2.56±0.39}\\
\bottomrule
\end{tabular}
}
\end{center}
\end{table}

\subsection{Comparison with Other State Abstraction Functions}

\begin{wrapfigure}{R}{0.65\textwidth}
    \centering
    \includegraphics[width=0.65\textwidth]{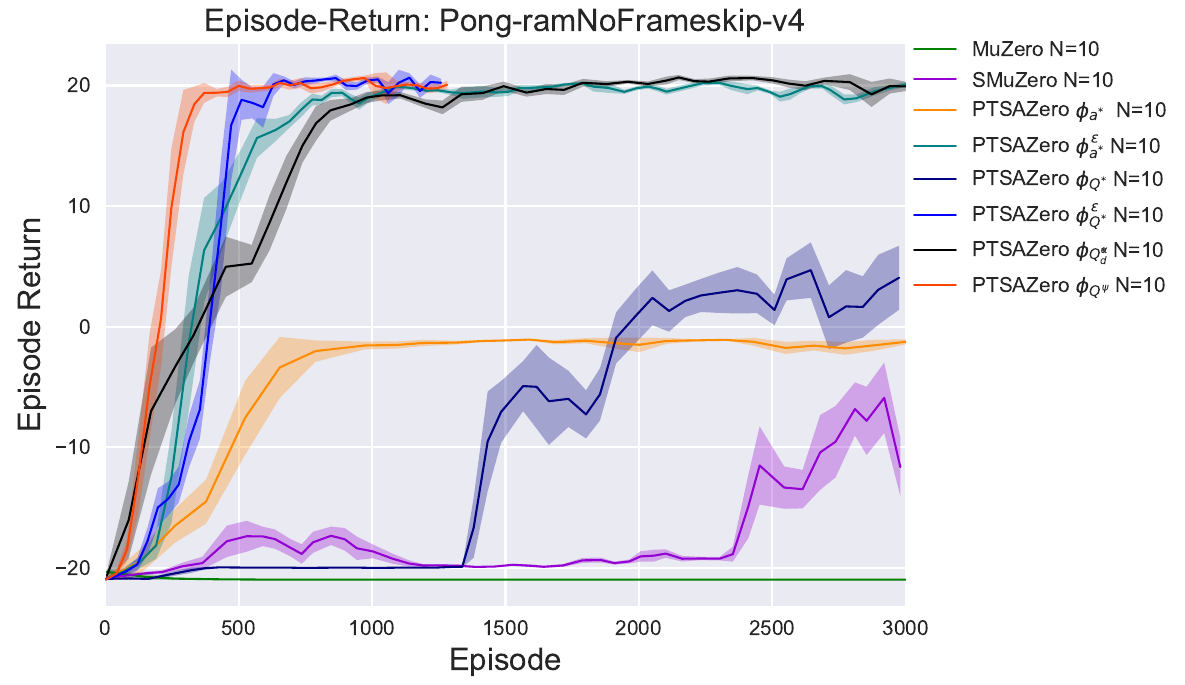}
    \caption{Results of PTSAZero with different state abstraction functions in the Pong task when $N=10$. Some state abstraction functions cannot accurately abstract the search space with a small number of simulations.}
    \label{fig:pong-N=10}
\end{wrapfigure}

We compare the effectiveness of our proposed probability tree state abstraction with other state abstraction functions on Atari and gym benchmarks by integrating them into \textit{PTSA} algorithm. Referring to the speedup evaluation method in \textit{Gumbel MuZero} \cite{danihelka2022policy}, Table \ref{tab:state abstractions} shows the speedup of PTSAZero with different state abstraction functions compared to the \textit{Sampled MuZero}. The specific usage and properties of other state abstraction functions can be found in previous works \cite{abel2016near,abel2018state, li2006towards}. We adjusted hyperparameters for different state abstraction functions and selected the best values ($\epsilon$ and $d$ are set to 0.5 and 0.2, respectively). The results demonstrate that the proposed probability tree state abstraction $\phi_{Q_{\alpha}^{}\psi}$ can achieve better speedup performance on average compared to other state abstraction functions. To evaluate the robustness of $\phi_{Q_{\alpha}^{\psi}}$, PTSAZero with different state abstraction functions with $N=10$ is compared in the Pong task. As shown in Figure \ref{fig:pong-N=10}, SMuZero and MuZero can not learn an effective policy, and PTSAZero with $\phi_{Q^{*}}^{\varepsilon}$ and $\phi_{a^{*}}$ may aggregate incorrect states, which leads to a decrease in performance. We also find that the state abstraction function $\phi_{a^{*}} $ and $\phi_{Q^{*}}$  have strict requirements for the abstract conditions, resulting in a small search space reduction ( $1.5\%$  and $2.8\%$ respectively). Compared with other state abstraction functions, $\phi_{Q_{\alpha}^{\psi}}$ can abstract the original search space more accurately and robustly, thus accelerating the training process.

\section{Conclusion}
This paper introduces \textit{PTSA} algorithm for improving the computational efficiency of MCTS-based algorithms. For efficient abstraction in tree search space, we define path transitivity in the formulation of tree state abstraction. Furthermore, we evaluate that the proposed probability tree state abstraction has a better performance compared with previous state abstraction functions. The experimental results demonstrate that \textit{PTSA} can be integrated with state-of-the-art algorithms and achieve comparable performance with $10\%-45\%$ reduction in tree search space. However, the main limitation of the proposed method is that the parameters of some state abstraction functions need to be manually designed to obtain a more accurate abstract state space. Furthermore, selecting an appropriate state abstraction function based on the characteristics of the state space and transition model is also a potential challenge. Further research will be conducted to address these issues for better performance.


\newpage
\appendix

\section{Path and Node Transitivity}
\begin{theorem}
    For $\forall (v_{1}, v_{2}, v_{3}) \in \mathcal{V}$
    and $(b_{1}, b_{2}, b_{3}) \in \mathcal{B}$:
\begin{equation}
\begin{aligned}
        &\left[\left[p_{bM}\left(b_{1}, b_{2}\right) \wedge p_{bM}\left(b_{2}, b_{3}\right)\right] \Longrightarrow p_{bM}\left(b_{1}, b_{3}\right)]\right] \Longleftrightarrow \\
        &\left[\left[p_{vM}\left(v_{1}, v_{2}\right) \wedge p_{vM}\left(v_{2}, v_{3}\right)\right] \Longrightarrow p_{vM}\left(v_{1}, v_{3}\right)\right]
\end{aligned}
\end{equation}
\end{theorem}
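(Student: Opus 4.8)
The plan is to exploit the structural decomposition of the path predicate given in the Tree State Abstraction definition, namely that for two equal-length paths $p_{bM}(b_{1},b_{2}) = \bigwedge_{i=0}^{l-1} p_{vM}(v_{i}^{b_{1}}, v_{i}^{b_{2}})$. Since the statement is a biconditional relating path transitivity to node transitivity, I would prove the two implications separately; both reduce to reasoning componentwise across this conjunction, so no machinery beyond the decomposition and the definition of transitivity is needed.

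For the forward direction (node transitivity $\Longrightarrow$ path transitivity), I would take arbitrary equal-length paths $b_{1}, b_{2}, b_{3}$ satisfying $p_{bM}(b_{1},b_{2})$ and $p_{bM}(b_{2},b_{3})$. Expanding both predicates via the decomposition yields $p_{vM}(v_{i}^{b_{1}}, v_{i}^{b_{2}})$ and $p_{vM}(v_{i}^{b_{2}}, v_{i}^{b_{3}})$ for every index $i$. Applying node transitivity at each index $i$ gives $p_{vM}(v_{i}^{b_{1}}, v_{i}^{b_{3}})$, and re-assembling the conjunction $\bigwedge_{i=0}^{l-1} p_{vM}(v_{i}^{b_{1}}, v_{i}^{b_{3}})$ recovers exactly $p_{bM}(b_{1},b_{3})$, which is the path transitivity conclusion.

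For the backward direction (path transitivity $\Longrightarrow$ node transitivity), I would specialize to paths of length $l=1$. A length-one path is a single node, so the decomposition collapses to $p_{bM}(b_{1},b_{2}) = p_{vM}(v_{0}^{b_{1}}, v_{0}^{b_{2}})$. Given any node triple $(v_{1},v_{2},v_{3})$ with $p_{vM}(v_{1},v_{2}) \wedge p_{vM}(v_{2},v_{3})$, I would regard $v_{1},v_{2},v_{3}$ as the singleton paths, so that the two hypotheses become $p_{bM}$ assertions, invoke path transitivity to obtain $p_{bM}$ on the first and third singleton paths, and read this back as $p_{vM}(v_{1},v_{3})$, completing node transitivity.

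The main obstacle is conceptual rather than computational: in the backward direction I must justify that an arbitrary node triple can legitimately be instantiated as a triple of sibling length-one paths in $\mathcal{B}$, so that the universally quantified path transitivity hypothesis genuinely applies. Accordingly, I would state the base case $l=1$ of the decomposition explicitly, identifying the node predicate with the path predicate on singletons, rather than leaving it implicit in the general formula; this base case is precisely what licenses the reduction and closes the equivalence.
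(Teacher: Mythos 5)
Your proof is correct and follows essentially the same route as the paper's: both directions rest on the conjunction decomposition $p_{bM}(b_{1},b_{2}) = \bigwedge_{i} p_{vM}(v_{i}^{b_{1}}, v_{i}^{b_{2}})$, with node transitivity applied componentwise for one implication and singleton (length-one) paths used to recover node transitivity from path transitivity for the other. If anything, your version is slightly more careful than the paper's, which only writes out the componentwise step for paths of length two and leaves the singleton-instantiation point (that a node triple must be realizable as a sibling path triple in $\mathcal{B}$) implicit.
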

\begin{proof}
Consider three paths only contain one node respectively: $b_{1}=\{v_{1}\}, b_{2}=\{v_{2}\}, b_{3}=\{v_{3}\}$. 
For $(b_{1}, b_{2}, b_{3}) \in \mathcal{B}$:
\begin{equation}
    p_{bM}\left(b_{1}, b_{2}\right) = p_{vM}\left(v_{1}, v_{2}\right)
\end{equation}
\begin{equation}
    p_{bM}\left(b_{2}, b_{3}\right) = p_{vM}\left(v_{2}, v_{3}\right)
\end{equation}
\begin{equation}
    p_{bM}\left(b_{1}, b_{3}\right) = p_{vM}\left(v_{1}, v_{3}\right)
\end{equation}
 If the condition is reversed, the equation can also hold. Consider three arbitrary branches (sibling branches of common nodes are omitted): $b_{1}=\{v_{1},v_{2}\}, b_{2}=\{v_{3},v_{4}\}, b_{3}=\{v_{5},v_{6}\}$.

s.t. $p_{vM}\left(v_{1}, v_{2}\right) \wedge p_{vM}\left(v_{2}, v_{3}\right)\Longrightarrow p_{vM}\left(v_{1}, v_{3}\right)$

According to the definition of branch predicate:
\begin{equation}
    p_{bM}\left(b_{1}, b_{2}\right) = p_{vM}\left(v_{1}, v_{3}\right) \wedge p_{vM}\left(v_{2}, v_{4}\right) 
\end{equation}

\begin{equation}
    p_{bM}\left(b_{2}, b_{3}\right) = p_{vM}\left(v_{3}, v_{5}\right) \wedge p_{vM}\left(v_{4}, v_{6}\right)
\end{equation}

\begin{equation}
    \begin{aligned}
    &p_{bM}\left(b_{1}, b_{2}\right) \wedge p_{bM}\left(b_{2}, b_{3}\right) \\
    &= p_{vM}\left(v_{1}, v_{3}\right) \wedge p_{vM}\left(v_{2}, v_{4}\right) \wedge p_{vM}\left(v_{3}, v_{5}\right) \wedge p_{vM}\left(v_{4}, v_{6}\right) \\
    &= p_{vM}\left(v_{1}, v_{5}\right) \wedge p_{vM}\left(v_{2}, v_{6}\right) \\
    &= p_{bM}\left(b_{1}, b_{3}\right)
\end{aligned}
\end{equation}
\end{proof}

\section{Probability of Transitivity}

\begin{proposition}
    The probability of transitivity for $\phi_{Q_{\alpha}^{\psi}}$ can be computed as:
\begin{equation}
     \begin{split}
       &\mathbb{P}\{(p_{bM}(b_{1},b_{2}) \wedge p_{bM}(b_{2},b_{3}) \Longrightarrow p_{bM}(b_{1},b_{3}))\} = \\
       &\mathbb{P}\{\phi_{Q_{\alpha}^{\psi}}(b_{1})=\phi_{Q_{\alpha}^{\psi}}(b_{2})\} \mathbb{P}\{\phi_{Q_{\alpha}^{\psi}}(b_{2})=\phi_{Q_{\alpha}^{\psi}}(b_{3})\}   \\
       &\mathbb{P}\{\phi_{Q_{\alpha}^{\psi}}(b_{1})=\phi_{Q_{\alpha}^{\psi}}(b_{3})\} + (1-\mathbb{P}\{\phi_{Q_{\alpha}^{\psi}}(b_{1})=\phi_{Q_{\alpha}^{\psi}}(b_{3})\}) \\
       & (1-\mathbb{P}\{\phi_{Q_{\alpha}^{\psi}}(b_{1})=\phi_{Q_{\alpha}^{\psi}}(b_{2})\} \mathbb{P}\{\phi_{Q_{\alpha}^{\psi}}(b_{2})=\phi_{Q_{\alpha}^{\psi}}(b_{3})\})
    \end{split}
\end{equation}
\end{proposition}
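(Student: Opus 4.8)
The plan is to reduce the transitivity statement to a Boolean combination of three elementary aggregation events and then evaluate its probability using the factorization supplied by Eq.~\eqref{psb}. First I would introduce the shorthand $E_{12}$, $E_{23}$, $E_{13}$ for the events $\{\phi_{Q_{\alpha}^{\psi}}(b_1)=\phi_{Q_{\alpha}^{\psi}}(b_2)\}$, $\{\phi_{Q_{\alpha}^{\psi}}(b_2)=\phi_{Q_{\alpha}^{\psi}}(b_3)\}$, and $\{\phi_{Q_{\alpha}^{\psi}}(b_1)=\phi_{Q_{\alpha}^{\psi}}(b_3)\}$, with success probabilities $P_{12}$, $P_{23}$, $P_{13}$ read off from Eq.~\eqref{psb}. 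Each $E_{ij}$ is the Bernoulli outcome of the probabilistic aggregation test on the corresponding sibling path pair, so the transitivity predicate $\big(p_{bM}(b_1,b_2)\wedge p_{bM}(b_2,b_3)\big)\Rightarrow p_{bM}(b_1,b_3)$ becomes the random Boolean $(E_{12}\wedge E_{23})\Rightarrow E_{13}$.

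Next I would model the three aggregation tests as independent Bernoulli trials. This is the key modeling step: although the pairs $(b_1,b_2)$, $(b_2,b_3)$, $(b_1,b_3)$ share endpoints, each decision is an independent stochastic test governed only by the Jensen--Shannon divergence of the two value distributions being compared, so I treat $E_{12}$, $E_{23}$, $E_{13}$ as mutually independent. Under this assumption $\mathbb{P}[E_{12}\wedge E_{23}]=P_{12}P_{23}$ and the joint law factorizes over the three events.

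I would then evaluate the transitivity probability by partitioning on the antecedent $E_{12}\wedge E_{23}$. The transitive outcome is realized in exactly two disjoint, internally consistent scenarios: either the antecedent fires and $E_{13}$ also fires, contributing $\mathbb{P}[E_{12}\wedge E_{23}\wedge E_{13}]=P_{12}P_{23}P_{13}$; or the antecedent fails and $E_{13}$ likewise fails, contributing $\mathbb{P}[\neg(E_{12}\wedge E_{23})\wedge\neg E_{13}]=(1-P_{12}P_{23})(1-P_{13})$. Adding the two disjoint contributions yields
\begin{equation}
\mathbb{P}\{\mathrm{transitivity}\}=P_{12}P_{23}P_{13}+(1-P_{13})(1-P_{12}P_{23}),
\end{equation}
which, after substituting the names $\mathbb{P}\{\phi_{Q_{\alpha}^{\psi}}(b_i)=\phi_{Q_{\alpha}^{\psi}}(b_j)\}$ for each $P_{ij}$, is precisely the claimed expression.

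The main obstacle I anticipate is justifying the independence of $E_{12}$, $E_{23}$, $E_{13}$, since the three comparisons reuse the value distributions of the shared paths; I would address this by arguing that the aggregation outcome is a fresh stochastic test for each pair rather than a deterministic function of shared data, and note that the product form of Eq.~\eqref{psb} already presupposes this independence at the path level. A secondary point worth stating explicitly is the reading of the implication: one must recognize that the \emph{consistent rejection} branch $\neg(E_{12}\wedge E_{23})\wedge\neg E_{13}$ is the intended contribution when the antecedent does not fire, which fixes the exact algebraic form of the second summand.
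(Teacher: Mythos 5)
Your proposal is correct and follows essentially the same route as the paper's Appendix~B proof: both decompose the event into the two disjoint ``consistent'' scenarios $E_{12}\wedge E_{23}\wedge E_{13}$ and $\neg(E_{12}\wedge E_{23})\wedge\neg E_{13}$, factorize via independence of the pairwise aggregation tests, and sum to obtain $P_{12}P_{23}P_{13}+(1-P_{12}P_{23})(1-P_{13})$. The only difference is that you make explicit two points the paper leaves implicit --- the mutual independence of the three Bernoulli tests and the fact that the vacuous case $\neg(E_{12}\wedge E_{23})\wedge E_{13}$ is deliberately excluded from the count --- which is a welcome clarification but not a different argument.
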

\begin{proof}
    All cases can be divided into two categories:
    \begin{itemize}
        \item $p_{bM}(b_{1},b_{3}) = 1$
        \item $p_{bM}(b_{1},b_{3}) = 0$
    \end{itemize}
    If  $p_{bM}(b_{1},b_{3}) = 1$, $p_{bM}(b_{1},b_{2}) = 1$ and $p_{bM}(b_{2},b_{3}) = 1$.
    \begin{equation}
    \begin{split}
        &\mathbb{P}_{1} = \mathbb{P}\{\phi_{Q_{\alpha}^{\psi}}(b_{1})=\phi_{Q_{\alpha}^{\psi}}(b_{2})\} \mathbb{P}\{\phi_{Q_{\alpha}^{\psi}}(b_{2})=\phi_{Q_{\alpha}^{\psi}}(b_{3})\}   \\
       &\mathbb{P}\{\phi_{Q_{\alpha}^{\psi}}(b_{1})=\phi_{Q_{\alpha}^{\psi}}(b_{3})\}
    \end{split}
    \end{equation}
    If $p_{bM}(b_{1},b_{3}) = 0$, $p_{bM}(b_{1},b_{2}) = 0$ or $p_{bM}(b_{2},b_{3}) = 0$.
    \begin{equation}
    \begin{split}
        &\mathbb{P}_{2} = (1-\mathbb{P}\{\phi_{Q_{\alpha}^{\psi}}(b_{1})=\phi_{Q_{\alpha}^{\psi}}(b_{3})\}) \\
       &(1-\mathbb{P}\{\phi_{Q_{\alpha}^{\psi}}(b_{1})=\phi_{Q_{\alpha}^{\psi}}(b_{2})\} \mathbb{P}\{\phi_{Q_{\alpha}^{\psi}}(b_{2})=\phi_{Q_{\alpha}^{\psi}}(b_{3})\})
    \end{split}
    \end{equation}
    \begin{equation}
    \begin{split}
        &\mathbb{P}\{(p_{bM}(b_{1},b_{2}) \wedge p_{bM}(b_{2},b_{3}) \Longrightarrow p_{bM}(b_{1},b_{3}))\} = \mathbb{P}_{1}+\mathbb{P}_{2}=\\
        &\mathbb{P}\{\phi_{Q_{\alpha}^{\psi}}(b_{1})=\phi_{Q_{\alpha}^{\psi}}(b_{2})\} \mathbb{P}\{\phi_{Q_{\alpha}^{\psi}}(b_{2})=\phi_{Q_{\alpha}^{\psi}}(b_{3})\}   \\
       &\mathbb{P}\{\phi_{Q_{\alpha}^{\psi}}(b_{1})=\phi_{Q_{\alpha}^{\psi}}(b_{3})\} + (1-\mathbb{P}\{\phi_{Q_{\alpha}^{\psi}}(b_{1})=\phi_{Q_{\alpha}^{\psi}}(b_{3})\}) \\
       & (1-\mathbb{P}\{\phi_{Q_{\alpha}^{\psi}}(b_{1})=\phi_{Q_{\alpha}^{\psi}}(b_{2})\} \mathbb{P}\{\phi_{Q_{\alpha}^{\psi}}(b_{2})=\phi_{Q_{\alpha}^{\psi}}(b_{3})\})
    \end{split}
    \end{equation}
\end{proof}

\section{Aggregation Error Bound of PTSA}

\begin{theorem}
    Considering a general tree state abstraction $\phi$ with a transitive predicate $p (\mathcal{L}_{\phi} \leq \zeta)$, the aggregation error in Alg. \ref{alg:TSAES} under balanced search is bounded as:
    \begin{equation}
        \mathrm{E}^{\phi} < \log_{|\mathcal{A}|}(N_{s}+1)\zeta
    \end{equation}
    If predicate $p (\mathcal{L}_{\phi} \leq \zeta)$ is not transitive, the aggregation error is bounded as:
    \begin{equation}
        \mathrm{E}^{\phi} < (|\mathcal{A}|-1)\log_{|\mathcal{A}|}(N_{s}+1)\zeta
    \end{equation}
\end{theorem}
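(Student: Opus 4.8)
The plan is to decompose the total aggregation error by the depth of the search tree and then, at each depth level, bound separately how much value discrepancy a single sibling aggregation can introduce and how many such aggregations can co-occur within one sibling group. First I would fix the geometry forced by the balanced-search assumption: with $N_{s}$ simulations and a branching factor of at most $|\mathcal{A}|$, a balanced search tree is complete to depth $L=\log_{|\mathcal{A}|}(N_{s}+1)$, so every root-to-leaf path, and hence every aggregated path pair produced by Algorithm~\ref{alg:TSAES}, has length at most $L$. This reduces the theorem to a level-by-level accounting.

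Next I would bound the per-node contribution. By hypothesis the predicate satisfies $p(\mathcal{L}_{\phi}\le\zeta)$, so whenever two sibling nodes are placed in the same abstract cluster the value loss incurred at that node is at most $\zeta$; summing the node-wise losses along a path recovers the $L\zeta$ scale already illustrated by the length-$3$, $3\zeta$ example preceding the statement. The transitive and non-transitive cases then differ only in how the sibling aggregations accumulate within a single cluster at a fixed depth. For the transitive case I would invoke Theorem~\ref{transitive}: path transitivity is equivalent to node transitivity, so the predicate behaves as a genuine equivalence relation, and the remark that the abstracted paths present in the current $\mathcal{B}_{\phi}$ belong to different clusters applies. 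Consequently any two paths assigned to the same cluster satisfy $|V^{\pi_{\phi}}(b_{1})-V^{\pi_{\phi}}(b_{2})|\le\zeta$ directly, so the cluster diameter never exceeds $\zeta$, the error does not compound across siblings, and summing over the $L$ levels yields $\mathrm{E}^{\phi}<\log_{|\mathcal{A}|}(N_{s}+1)\,\zeta$.

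For the non-transitive case I would use the observation that the number of aggregations within one sibling group is strictly less than the current branching factor, i.e.\ at most $|\mathcal{A}|-1$. Without transitivity the bound $\zeta$ holds only for directly compared pairs, so chaining up to $|\mathcal{A}|-1$ aggregations through the triangle inequality inflates the cluster diameter to $(|\mathcal{A}|-1)\zeta$ at each level; multiplying by the $L$ levels gives $\mathrm{E}^{\phi}<(|\mathcal{A}|-1)\log_{|\mathcal{A}|}(N_{s}+1)\,\zeta$. The strict inequalities follow because the balanced tree is generally not perfectly saturated and the root itself contributes no aggregation error.

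The hard part will be rigorously separating the depth-wise accumulation from the breadth-wise (sibling) accumulation, and in particular justifying that transitivity collapses the breadth factor from $|\mathcal{A}|-1$ down to $1$. This hinges on showing that a transitive predicate forces the cluster diameter to remain at $\zeta$ rather than growing with the number of merges, which is exactly where Theorem~\ref{transitive} together with the distinct-clusters property of $\mathcal{B}_{\phi}$ must be used with care; reconciling the pairwise-sum form of the definition of $\mathrm{E}^{\phi}$ with this per-level, per-cluster bookkeeping is the step I would expect to require the most attention.
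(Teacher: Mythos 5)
Your proposal follows essentially the same route as the paper's own proof: both use the balanced-search assumption to model the tree as an $|\mathcal{A}|$-ary tree of depth $\log_{|\mathcal{A}|}(N_{s}+1)$, accumulate the per-node loss $\zeta$ along that depth, and then distinguish the two cases by noting that transitivity restricts aggregation to pairs of paths in distinct clusters (breadth factor $1$) while non-transitivity forces consideration of up to $|\mathcal{A}|-1$ merges per sibling group. The triangle-inequality phrasing of the non-transitive factor is a slightly more explicit rendering of the paper's ``cumulative error of all nodes in the largest subtrees under the root'' count, but the decomposition and the resulting bounds are identical.
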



\begin{proof}\let\qed\relax
    Assuming an action space of size $A$ and expansion of one child node per simulation, the search tree under balanced search in \textit{MuZero} algorithm can be viewed as an $A$-ary tree. The average depth of the tree can be approximated as:$$ D \approx \log_{A} (N+1) $$, where $(N+1)$ represents the total number of nodes in the search tree, with $+1$ compensating for the root node that is not included in the depth calculation.
    
    Considering transitivity among all searched paths, it is possible to aggregate at most two paths, resulting in a maximum aggregation error equal to the cumulative error of all nodes on these two paths:
    \begin{equation}
        \mathrm{E}^{\phi_{max}} \leq \log_{|\mathcal{A}|}(N_{s}+1)\zeta
    \end{equation}
    \begin{equation}
        \mathrm{E}^{\phi^{r}} < \mathrm{E}^{\phi^{r}_{max}} \leq \log_{|\mathcal{A}|}(N_{s}+1)\zeta
    \end{equation}
    Considering non-transitivity among all searched paths, all paths should be considered for aggregation, the maximum number of subtrees under the root node in MuZero algorithm is limited by $\mathcal{|A|}$. Therefore, the maximum aggregation error after merging is determined by the cumulative error of all nodes in the largest subtrees under the root node:
    \begin{equation}
        \mathrm{E}^{\phi^{r}} < \mathrm{E}^{\phi^{r}_{max}} \leq (|\mathcal{A}|-1)\log_{|\mathcal{A}|}(N_{s}+1)\zeta
    \end{equation}
    
\end{proof}

\begin{figure}[t]
\subfigbottomskip=2pt
\centering
\subfigure[Parallelism Framework] {\includegraphics[width=0.45\linewidth]{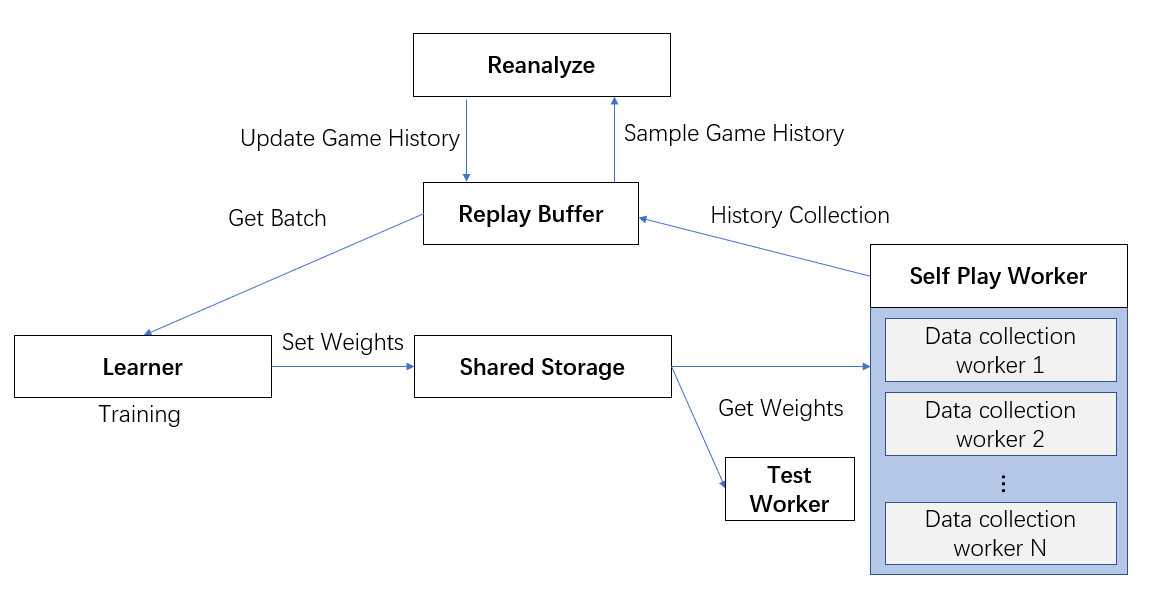}}
\subfigure[Algorithm Flow] {\includegraphics[width=0.4\linewidth]{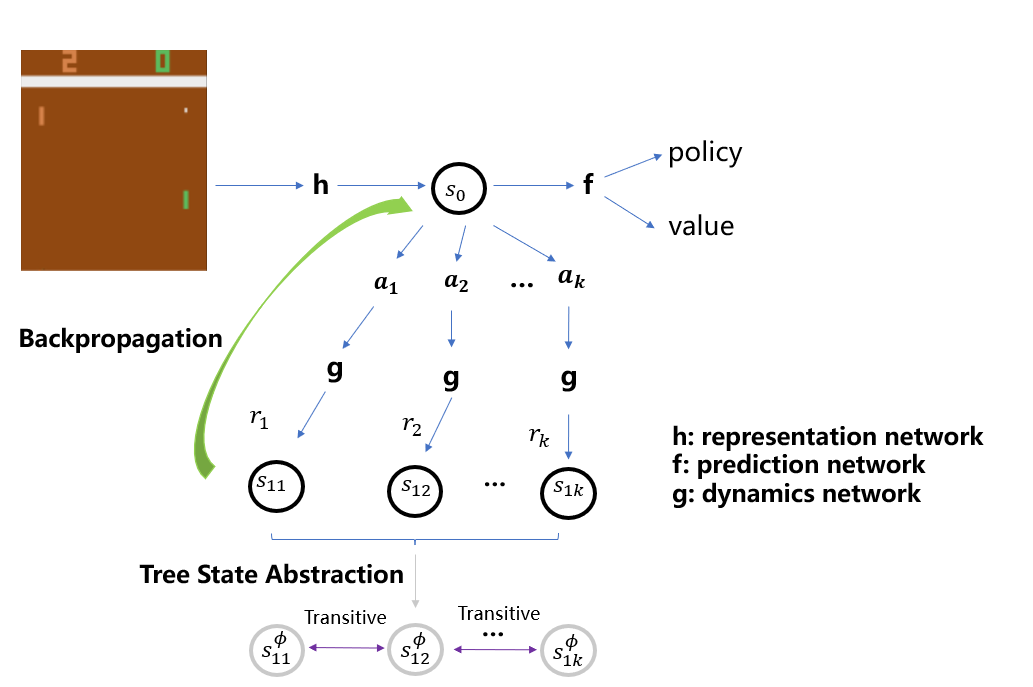}}
\caption{Parallelism framework of PTSAZero implementation and PTSAZero algorithm flow.}
\label{fig:parallelism}
\end{figure}


\begin{figure}[t]
\subfigbottomskip=2pt
\centering
\subfigure[Pong] {\includegraphics[width=0.45\linewidth]{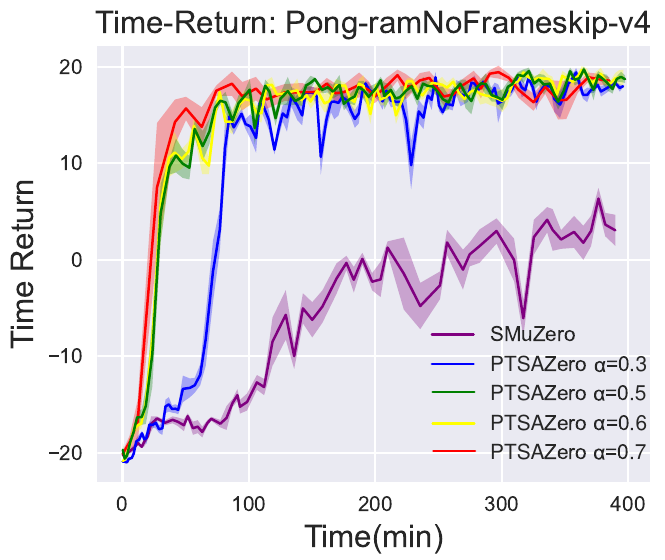}}
\subfigure[Freeway] {\includegraphics[width=0.48\linewidth]{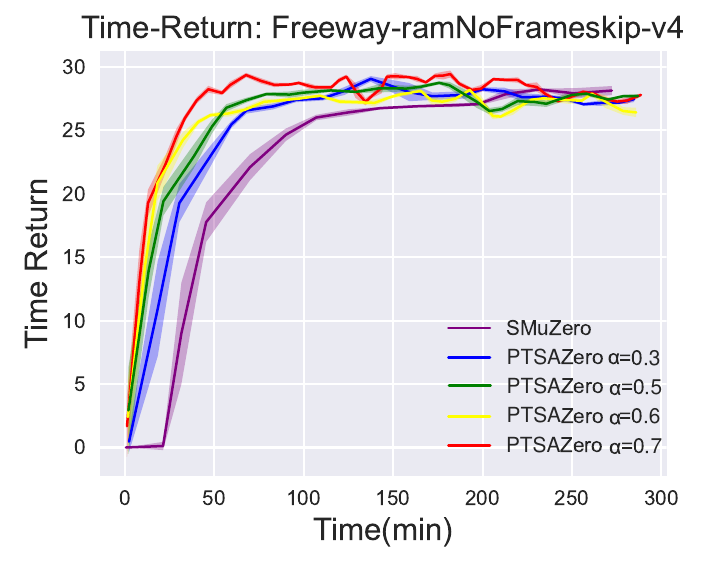}}
\caption{Experiment results of different parameters in probability state abstraction on Atari benchmarks.}
\label{fig:ablation}
\end{figure}

\begin{figure}[t]
    \centering
    \includegraphics[width=0.65\textwidth]{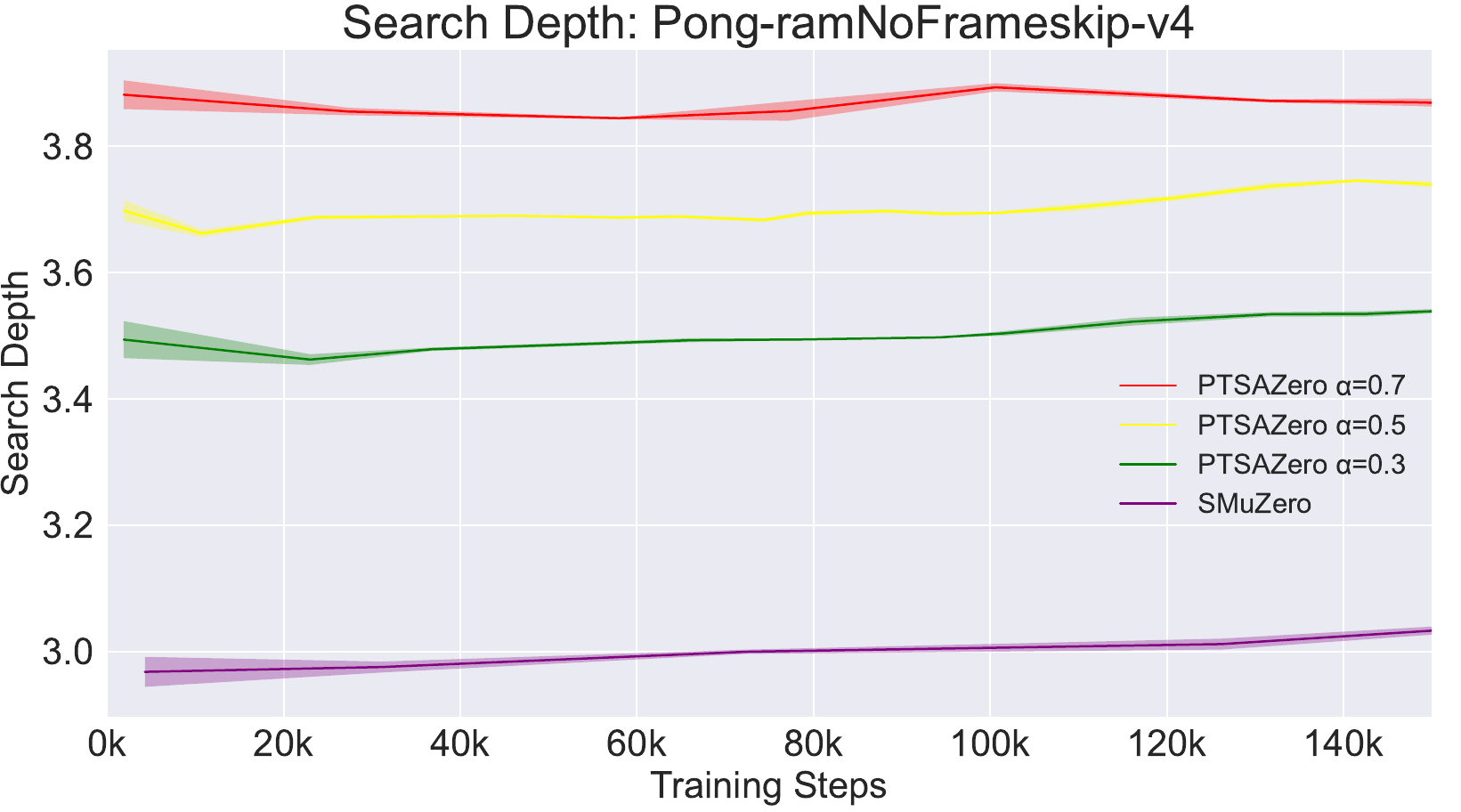}
    \vskip -0.25cm
    \caption{Comparison of average search depth in Pong. The average search depth represents the average path length of the search tree.}
    \label{fig:search depth}
\end{figure}

\section{Implementation}

All experiments are run on Intel Xeon ICX Platinum 8358 and GeForce RTX 3090. The implementation of MuZero is based on the code from \textbf{muzero-general} (https://github.com/werner-duvaud/muzero-general) and \textbf{model-based-rl} (https://github.com/JimOhman/\\
model-based-rl). The modification of SMuZero has three improvements over MuZero:
\begin{itemize}
    \item When expanding nodes, the MCTS only considers a set of sampled actions from the original action space, instead of enumerating all actions. The proposal distribution $\beta_{p}(a|s)$ is based on the policy network, which is consistent with \cite{hubert2021learning}.
    \item The UCB formula does not use the raw prior $\pi$, but instead the sample-based equivalent $\frac{\hat{\pi}}{\pi}$.
    \item Instead of utilizing the distribution of all actions, the policy is updated on the sampled actions.
\end{itemize}
The implementation of EfficientZero is based on the code from EfficientZero (https://github.com/YeWR/
EfficientZero). The network structures of all methods are modified as SMuZerO \cite{hubert2021learning} in Atari benchmarks for a fair comparison. 

The parallelism implementations of all methods are based on \textbf{ray} library \cite{moritz2018ray}. The parallelism framework and algorithm flow of PTSAZero are shown in Figure \ref{fig:parallelism} for better reproduction.

\begin{table}[t]
\vspace{-0cm}
  \centering
    \begin{tabular}{cc}
    \hline
    Parameter & Setting\\
    \hline
    \textit{frame size}	& $96 \times 96$ \\
    \textit{number of actors}	& 7 \\
    \textit{max history length} & 500 \\
    \textit{visit softmax temperatures} & 1.0,0.5,0.25	\\
    \textit{root dirichlet alpha} &0.25	\\
    \textit{root exploration fraction} &0.25  \\
    \textit{pb c base} &19652	\\
    \textit{pb c init} &1.25	\\
    \textit{buffer size} &10000	\\
    \textit{batch size} &256	\\
    \textit{td steps} &50	\\
    \textit{num unroll steps} &5	\\
    \textit{send weights frequency} &500 \\
    \textit{weight sync frequency} &1000	\\
    \textit{discount} &0.997	\\
    \textit{optimizer} &AdamW	\\
    \textit{lr init} &0.0008	\\
    \hline
    \end{tabular}
   \caption{Specific parameters in Atari benchmark.} 
  \label{tab:Atari Parameter}%
\end{table}%

\begin{figure*}[t!]
\subfigbottomskip=2pt
\centering
\subfigure[Assault] {\includegraphics[width=0.45\linewidth]{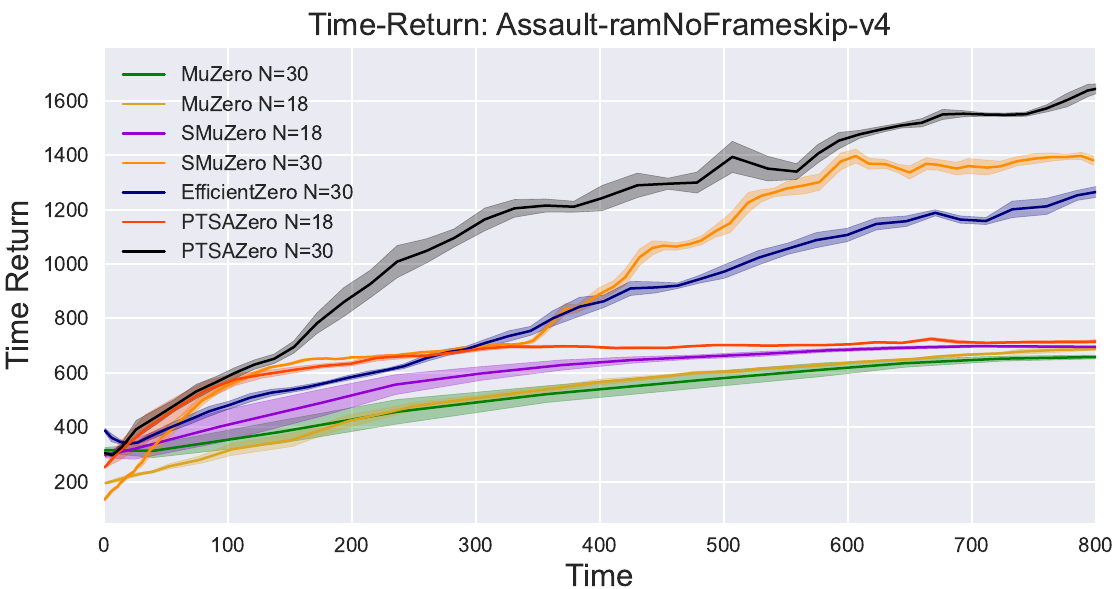}}
\subfigure[Seaquest] {\includegraphics[width=0.46\linewidth]{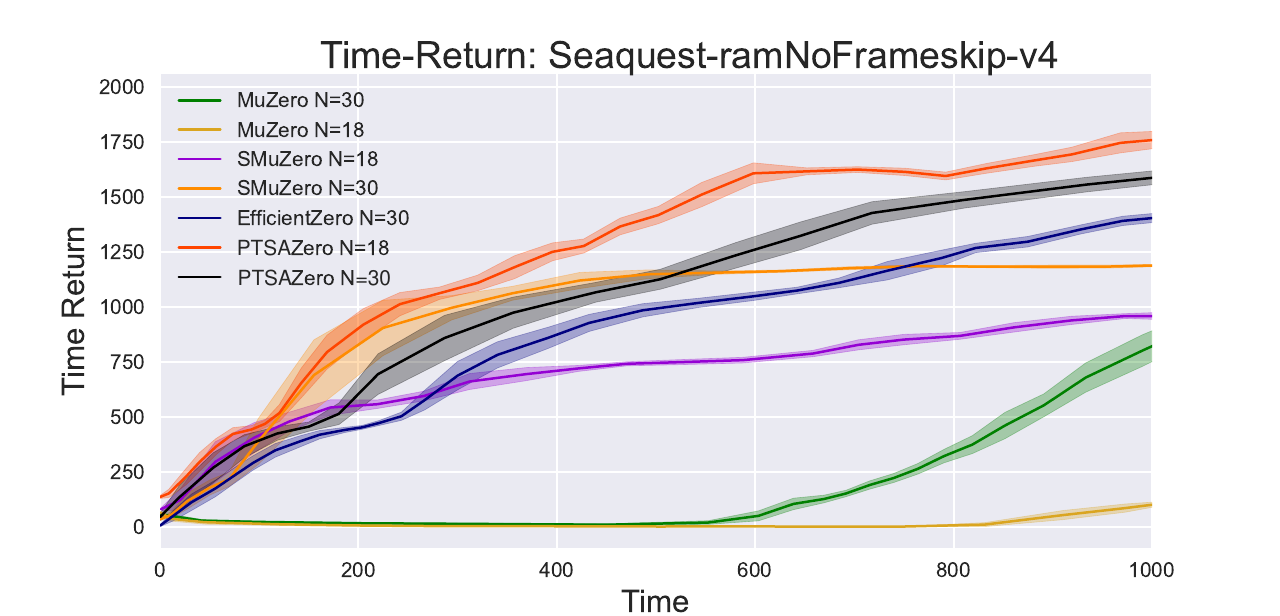}}
\subfigure[MsPacman] {\includegraphics[width=0.45\linewidth]{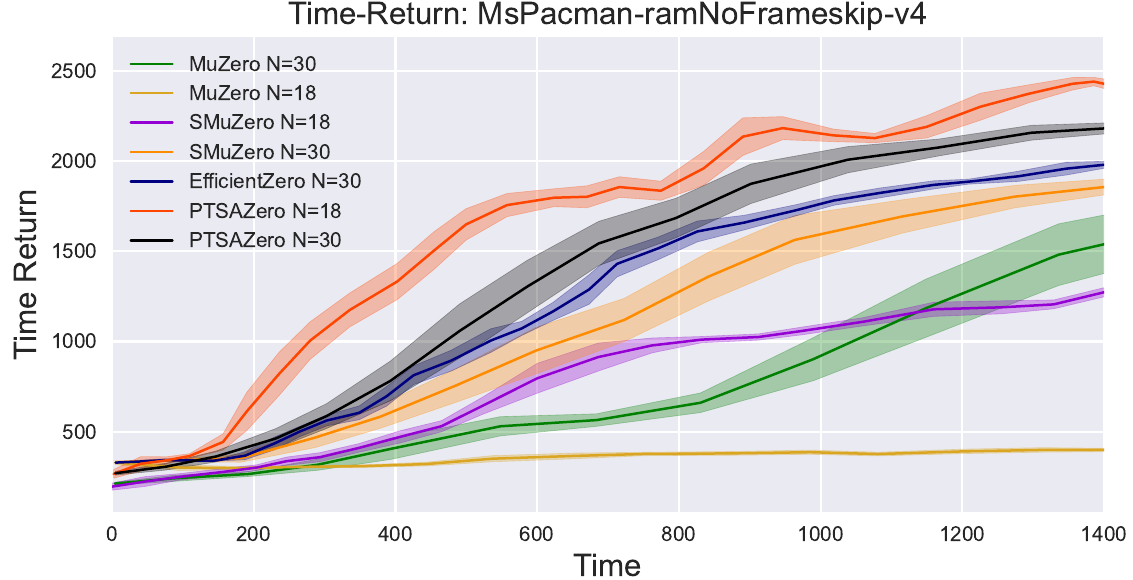}}
\subfigure[Breakout] {\includegraphics[width=0.45\linewidth]{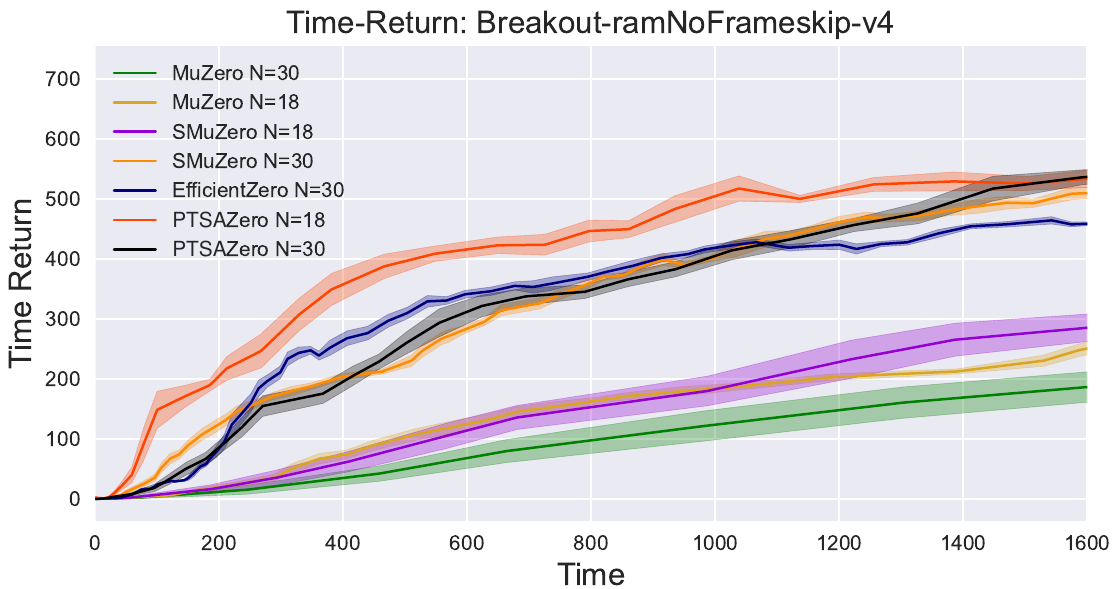}}

\caption{More experimental results on Atari benchmarks.} 
\label{fig:Atari2}
\end{figure*}

\section{Hyperparameters}
Conducting experiments on Atari game tasks, the setting of hyperparameters is shown in Table.\ref{tab:Atari Parameter}. Typically, hyperparameters include learning rate, optimizer, batch size, discount factor, experience replay buffer size, and more. The frame size of the Atari game denotes the pixel size of the observation. In the MuZero-based algorithm, each actor can interact with the environment and collect experience independently, which can increase the amount of experience and reduce the time needed for learning. To ensure equal parallel processing capabilities across all algorithms, we have set the number of actors to 7 for each method. This uniform setting helps to ensure that each method can effectively utilize parallel processing resources.

\section{Additional Experiments}
For a clear numerical comparison, Table \ref{tab:update frames} shows the average computation time of collecting 1k frames with different simulation times on Atari benchmarks. Compared to other algorithms, PTSA introduces an acceptable decrease in trajectory collection efficiency (less than $8\%$ on average), which results in a significant reduction in the whole training time. Additionally, we compare different $\alpha$ in probability tree state abstraction, and results are shown in Figure \ref{fig:ablation}. Results demonstrate that the algorithm's convergence speed improves as the parameter $\alpha$ increases.

Moreover, the comparison results of average search depth between SMuZero and PTSAZero with different $\alpha$ are shown in Figure \ref{fig:search depth}. Since the search space of MCTS is reduced by tree state abstraction, the search depth of PTSAZero is deeper than that of SMuZero with same number of simulations. 

\begin{table}[t!]
\caption{Average computation time (seconds) of collecting 1k frames in Atari benchmarks. Box. denotes Boxing, Free. denotes Freeway, Ten. denotes Tennis, Break. denotes Breakout, MsP. denotes MsPacman, and Sea. denotes Seaquest tasks respectively. Ave. denotes Average computation time. }
  \centering
    \begin{tabular}{c|cccccccccc}
    \hline
    Methods  & \multicolumn{1}{c}{Box.} & \multicolumn{1}{c}{Free.} & \multicolumn{1}{c}{Pong} & \multicolumn{1}{c}{Alien} & \multicolumn{1}{c}{Ten.} & \multicolumn{1}{c}{Assault} & \multicolumn{1}{c}{Break.} & \multicolumn{1}{c}{MsP.} & \multicolumn{1}{c}{Sea.} & \multicolumn{1}{c}{Ave.}\\
    \hline
    \textit{MuZero N=30}  &6.31	&3.47	&4.56	&8.85 &3.86 &3.41 &3.24	&3.43 &3.18 & 4.48\\
    \textit{SMuZero N=30}  &6.89 &4.43  &4.44	&8.89 &4.02 &3.47 &3.33 &3.50 &3.35	 &4.70  \\
    \textit{PTSAZero N=30} &6.74 &3.85	&4.81	&9.04 &4.11 &3.58 &3.50	&3.63 &3.37  &4.74   \\
    
    \hline
    \textit{MuZero N=18}  &3.06	 &2.39	&2.46	&3.49	&3.42  &1.71   &1.61 &1.92 &1.69 &2.42	\\
    \textit{SMuZero N=18} &4.12	 &2.41	&2.11	&3.07	&3.45  &1.76   &1.97 &1.98 &1.72	&2.51	\\
    \textit{PTSAZero N=18} &3.41 &1.96	&3.16	&3.37	&3.55  &1.85   &1.86 &2.05 &1.84	&2.56\\
    \hline
    \end{tabular}
  \label{tab:update frames}%
\end{table}

\end{document}